\theoremstyle{plain}
\newtheorem{theorem}{Theorem}
\newtheorem{proposition}{Proposition}
\newtheorem{lemma}{Lemma}
\theoremstyle{definition}
\newtheorem{assumption}{Assumption}
\theoremstyle{remark}
\newtheorem{remark}[theorem]{Remark}
\newtheorem{example}{Example}
\newcommand{\boldone}{\mathbbm{1}}
\newcommand{\rev}[1]{{\color{black}#1}}
\newcommand{\revold}[1]{{\color{black}#1}}
\newcommand{\EnbPI}{\Verb|EnbPI|}
\newcommand{\SPCI}{\texttt{SPCI}}
\newcommand{\betaReal}{\hat{\beta}}
\newcommand{\bP}{\mathbb{P}}
\newcommand{\indep}{\perp \!\!\! \perp}
\newcommand{\R}{\mathbb{R}}
\newcommand{\Ctalpha}{\widehat{C}_{t-1}(X_t)}
\newcommand{\hatQt}{\widehat{Q}_t}
\newcommand{\hatQtz}[1]{\widehat{Q}_t(#1)}
\newcolumntype{P}[1]{>{\centering\arraybackslash}p{#1}}
\icmltitlerunning{Sequential Predictive Conformal Inference for Time Series}
\begin{document}

\twocolumn[
\icmltitle{Sequential Predictive Conformal Inference for Time Series}



\icmlsetsymbol{equal}{*}

\begin{icmlauthorlist}
\icmlauthor{Chen Xu}{yyy}
\icmlauthor{Yao Xie}{yyy}
\end{icmlauthorlist}

\icmlaffiliation{yyy}{H. Milton Stewart School of Industrial and Systems Engineering,
Georgia Institute of Technology, Atlanta, Georgia, USA}

\icmlcorrespondingauthor{Yao Xie}{yao.xie@isye.gatech.edu}

\icmlkeywords{Machine Learning, ICML}

\vskip 0.3in
]



\printAffiliationsAndNotice{}  

\begin{abstract}
We present a new distribution-free conformal prediction algorithm for sequential data (e.g., time series), called the \textit{sequential predictive conformal inference} (\texttt{SPCI}). We specifically account for the nature that time series data are non-exchangeable, and thus many existing conformal prediction algorithms are not applicable. The main idea is to adaptively re-estimate the conditional quantile of non-conformity scores (e.g., prediction residuals), upon exploiting the temporal dependence among them. More precisely, we cast the problem of conformal prediction interval as predicting the quantile of a future residual, given a user-specified point prediction algorithm. Theoretically, we establish asymptotic valid conditional coverage upon extending consistency analyses in quantile regression. Using simulation and real-data experiments, we demonstrate a significant reduction in interval width of \texttt{SPCI} compared to other existing methods under the desired empirical coverage.
\end{abstract}

\section{Introduction} 

Uncertainty quantification for prediction algorithms is essential for statistical and machine learning models. 
Sequential prediction or time-series prediction aims to predict the subsequent outcome based on past observations. Uncertainty quantification in the form of prediction intervals is of particular interest for high-stake domains such as finance, energy systems, healthcare, and so on \citep{Harries1999SPLICE2CE,diaz2012review,renew_data_challenge}. Classic approaches for prediction interval are typically based on strong parametric assumptions of time-series models such as autoregressive and moving average (ARMA) models \citep{tsmethod}, which impose strong distribution assumptions on the data-generating process. There need to be principled ways to perform uncertainty quantification for complex prediction models such as random forests \citep{RF} and neural networks \citep{DeepReg}.

Conformal prediction (CP) has become a popular distribution-free technique to perform uncertainty quantification for complex machine learning algorithms. However, conformal prediction for time series has been a challenging case because such data do not satisfy the exchangeability assumption in conformal inference, and thus we need to adjust existing or even develop new \textit{sequential CP} algorithms with theoretical guarantees. The challenges also arise in real-world applications where time series data tend to have significant stochastic variations and strong correlations. These challenges are illustrated via a real-data example for solar energy prediction, as shown in Figure \ref{serial_depend}, where the prediction residuals (using random forest as prediction algorithm) are still highly correlated. Besides the temporal correlation in the prediction residuals (or conformity scores in general), we observe that a notable feature of sequential conformal prediction is that the prediction residuals can be obtained as ``feedback'' to the algorithm. For instance, for one-step ahead prediction, the prediction accuracy of the prediction algorithm is revealed immediately after one-time step. Thus, the recent prediction residuals reveal whether or not the predictive algorithm is performing well for that segment of data. Such feedback structure is illustrated in Figure \ref{compare_resid}, which highlights the conceptual difference between traditional conformal and sequential conformal prediction methods. We specifically exploited such feedback structure in designing the sequential conformal prediction algorithms. 

More precisely, both the traditional conformal inference and the sequential conformal inference considered in this paper are general-purpose wrappers that can be used around any predictive model for any data and proceed by defining ``non-conformity scores''.  However, there are also significant differences: Traditional conformal prediction assumes exchangeable training and test data to obtain performance guarantees, which leads to exchangeable non-conformity scores, and cannot receive feedback during prediction. In contrast, sequential CP observes non-exchangeable data sequences and leverages feedback during prediction.

In this work, we propose a \textit{sequential predictive conformal inference} (\texttt{SPCI}) framework for time series with scalar outputs. The idea is to utilize the feedback structure of prediction residuals in the sequential prediction problem to obtain desired coverage. We specifically exploit the serial dependence across prediction residuals (conformity score) by performing quantile regression using past residuals for the future prediction intervals.; thus, the most recent past residuals contain information about the immediate future ones. Similar to most existing conformal prediction literature, we make no assumptions about the data-generating process or the quality of estimation by the point estimator. Our main contributions are

$\bullet$ The main novelty of \SPCI{} is the \rev{time-adaptive re-estimation of residual quantiles over time, upon leveraging the temporal dependency among residuals.} We use Random Forest for quantile regression here, but \SPCI{} is applicable to other quantile regression methods.

$\bullet$ Theoretically, we obtain asymptotic conditional coverage of the constructed intervals for dependent data, based on prior results for random forest quantile regression. When data are exchangeable, we show that \SPCI{} enjoys the same finite-sample and distribution-free marginal coverage guarantee as traditional conformal prediction methods. 

$\bullet$ Experimentally, we demonstrate competitive and/or improved empirical performance against baseline CP methods on sequential data. In particular, \SPCI{} can obtain significantly narrower intervals on real data without coverage loss. We further demonstrate the benefit of \SPCI{} in multi-step predictive inference.


\subsection{Literature review}

Conformal prediction has been an increasingly popular framework for distribution-free uncertainty quantification. Initially proposed in \citep{conformaltutorial}, CP methods generally proceed as follows. First, one designs a type of ``non-conformity score'' based on the given point estimator $\hat{f}$, where the score measures how different a potential value of the response variable $Y$ is to existing observations. A common choice for such scores in regression problems is the prediction residual. Second, one computes these scores on a \textit{hold-out} set not used to train the estimator $\hat{f}$. Third, the prediction interval is defined as all potential values of $Y$ whose non-conformity score is less than $1-\alpha$ fraction of these scores over the hold-out set. Many existing works such as \citep{inductCP,QOOB,MJ_classification,Candes_classification} utilize this idea for uncertainty quantification in regression or classification problems. Comprehensive surveys and tutorials can be found in \citep{conformalreview,angelopoulos2021gentle}. CP framework are distribution-free and model-free: they require neither distributional assumptions on data nor special classes of prediction functions, hence being particularly attractive in practice. Nevertheless, the desired performance guarantee of CP methods relies on \textit{exchangeability} (e.g., the simplest case is when data are i.i.d.), which hardly holds for time series.

\begin{figure}[!t]
  \centering
  \includegraphics[width=0.75\linewidth]{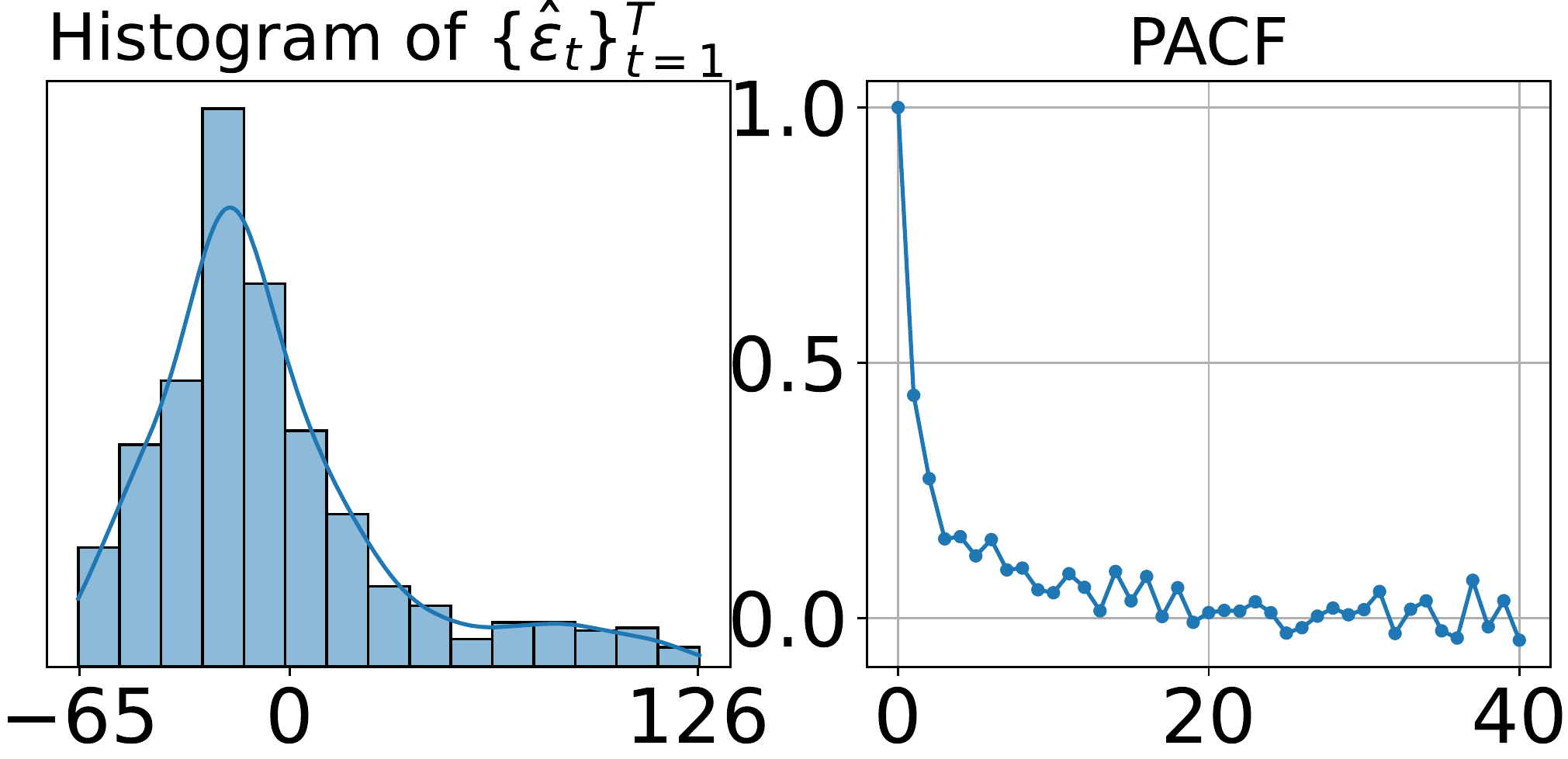}
    \caption{Solar power radiation prediction for downtown Atlanta, Georgia, USA (further explanation in Section \ref{sec_real_data}). We use random forest for one-step-ahead prediction. The histogram of prediction residuals (left) shows that residual distribution is highly skewed, and the partial auto-correlation between residuals (right) shows a significant serial correlation among residuals. Thus, it is essential to consider serial dependency when constructing prediction intervals: the serial dependence means that the most recent past residuals contain information about the immediate future ones.
    }
    \label{serial_depend}
    \vspace{-0.1in}
\end{figure}

Recently, significant efforts have been made to extend CP methods beyond exchangeable data; several are towards building sequential conformal prediction methods. They typically do so via updating non-conformity scores (e.g., prediction residuals) \citep{xu2021AD,xu2021conformal} and/or adjust significance level $\alpha$ based on rolling coverage of $Y_t$. 
This include \citep{Gibbs2021AdaptiveCI,Zaffran2022AdaptiveCP,feldman2022conformalized, lin2022conformal} and specifically, the AdaptCI algorithm, which adjusts the significance level $\alpha$ based on real-time coverage status during prediction---the significance level is lower when the prediction interval at time $t$ fails to contain the actual observation $Y_t$. The prediction intervals thus have adaptive width based on the updated significance levels and maintain coverage on stock market data in practice. 
Furthermore, \cite{RinaNonExchange} proves the coverage gap for non-exchangeable data based on the total variation (TV) distance between the non-conformity scores. The work then proposes NEX-CP, a general re-weighting scheme for non-exchangeable data, where the weights should ideally be chosen to be inversely proportional to the TV distances. The authors demonstrate the robustness of NEX-CP on datasets with change points and/or distribution shifts.
For sequential data, \cite{xu2021conformal} proposes \EnbPI{}, which updates residuals of ensemble predictors during prediction to more accurately calibrate prediction intervals. In practice, \EnbPI{} can maintain desired $1-\alpha$ coverage for different types of time series. 
Despite the existing efforts, these sequential CP methods have not exploited serial correlation among non-conformity scores (cf. Figure \ref{serial_depend})---they only use empirical quantiles (possibly with fixed weights) of past residuals to compute intervals, which is a drastic difference from \SPCI{}. 

\rev{Besides conformal prediction, probabilistic forecasting approaches have also been widely used when building predicting intervals. These approaches typically train a single model to minimize the pinball loss, including the MQ-CNN \citep{wen2017mqcnn}, DeepAR \citep{salinas2020deepar}, Temporal Fusion Transformer (TFT) \citep{lim2021temporal}, etc. However, comparing to \SPCI{} and related CP works, these approaches have two major limitations. First, they are not ``model-free'': special designs of the predictive model and hyper-parameter tuning are required for satisfactory performances. Second, they are not ``distribution-free'': distributional assumptions on time-series are often imposed, such as Gaussianity \citep{salinas2020deepar}. Corresponding theoretical guarantees on constructed prediction intervals are also often lacking. In our experiments, we demonstrate the improved performance of \SPCI{} against DeepAR and TFT.}

\subsection{Connection with related works}

Through theoretical analysis, we find that when using random forest quantile regression, \SPCI{}  can be viewed as adaptively learning the (data-dependent) weights of the prediction residuals/non-conformity scores when constructing the prediction intervals using weighted quantile values. Hence, it has an interesting connection to the recent work \citep{RinaNonExchange}, which develops a general conformal prediction framework for non-exchangeable data. In that work, weights are pre-determined and non-adaptive (such as geometrically decaying weights), and the authors also pointed out that ``how to choose weights optimally ... is an interesting and important question that we leave for future work'' and ``leave a more detailed investigation of data dependent weights for future work'' \citep{RinaNonExchange}. So our work is a step towards this direction. 

We further remark on several key differences of \SPCI{} with prior works. Method-wise, our prediction intervals are constructed using conditional quantile regression functions on \textit{non-conformity scores} (e.g., residuals). In contrast, existing quantile-regression-based conformal prediction methods \citep{CPquantile,QOOB} directly fit conditional quantile functions on the \textit{response variables} $Y$, after which the intervals are constructed using \textit{empirical quantiles} of non-conformity scores. Theory-wise, we obtain similar asymptotic conditional coverage for dependent residuals as in \citep{xu2021conformal}. However, different from that work, we do not assume a particular functional form of the conditional distribution of the scalar output given feature variables.

\begin{figure}[!t]
  \centering
  \includegraphics[width=\linewidth]{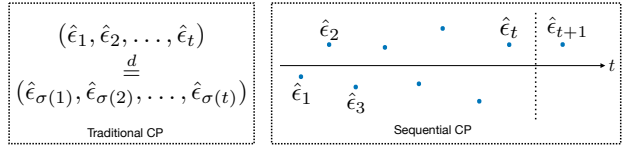}
    \caption{Differences between traditional and sequential Conformal Prediction (CP) methods. 
    In traditional CP, residuals are exchangeable, and the same set of residuals is used throughout the prediction. In contrast, sequential CP assumes an ordering of the potentially non-exchangeable residuals; residuals are available feedback to the prediction algorithms: past residuals are updated to include the new prediction residual $\hat{\epsilon}_{t+1}$ during prediction.}
     \label{compare_resid}
\end{figure}

\vspace{-0.1in}
\section{Problem setup}

Assume a sequence of observations $(X_t, Y_t)$, $t = 1, 2, \ldots$, where $Y_t$ are continuous scalar variables and $X_t \in \R^d$ denote features, which may either be the history of $Y_t$ or contain exogenous variables helpful in predicting the value of $Y_t$.
We can allow observations to be highly correlated under an unknown conditional distribution $Y_t|X_t, \ldots, X_1$, and do not assume a particular functional form of the conditional distribution $Y_t|X_t, \ldots, X_1$. Let the first $T$ samples $\{(X_t,Y_t)\}_{t=1}^T$ be the training data.

Our goal is to construct prediction intervals sequentially starting from time $T+1$ such that the prediction intervals will contain the true outcome with a pre-specified high probability $1-\alpha$ while the prediction interval is as narrow as possible. Here the \textit{significance level} $\alpha$ is user-specified. The prediction intervals $\Ctalpha$, which depend on $\alpha$, are around point predictions $\widehat{Y}_t:=\hat{f}(X_t)$ for a given predictive model $\hat{f}$. A commonly used conformity score is the prediction residual:
\begin{equation*}
\hat{\epsilon}_t=Y_t-\widehat{Y}_t.
\end{equation*}
We emphasize that our algorithm provides prediction intervals for an arbitrary user-chosen predictive algorithm. Here the subscript $_{t-1}$ indicates the interval is constructed using previous up to $t-1$ many observations. 

There are two types of coverage guarantees to be satisfied by $\Ctalpha$. The first is the weaker \textit{marginal} coverage:
\begin{equation}\label{marginal_cov}
    \mathbb P(Y_t \in \Ctalpha) \geq 1-\alpha, \forall t,
\end{equation}
while the second is the stronger \textit{conditional} coverage:
\begin{equation}\label{cond_cov}
    \mathbb P(Y_t \in \Ctalpha|X_t) \geq 1-\alpha, \forall t.
\end{equation}
If $\Ctalpha$ satisfies \eqref{marginal_cov} or \eqref{cond_cov}, it is called marginally or conditionally valid, respectively. In terms of the interval width, to avoid vacuous prediction interval $\Ctalpha$ (in the extreme case, if one chooses the entire real line for all $t$, it will always contain the true outcome $Y_t$ with high probability), we should construct intervals with width $|\widehat{C}_{t-1}(X_t)|$ as narrow as possible. 

A natural approach in developing sequential CP methods is constructing sequential prediction intervals using the most recent feedback in predicting $Y_t$, as shown in Figure \ref{sequential_CP_diagram}. 
However, using the empirical distribution of updated residuals may not fully exploit the temporal dependence across the residuals. 
Indeed, when residuals are temporally correlated, the past residuals contain information about the distribution of future residuals and can be used to perform ``predictive'' conformal inference. More precisely, we should use the past residuals to predict the tail probability of the new residual, as doing so may allow certain adaptivity. The above is the main idea of our proposed \SPCI{} algorithm. 

\begin{figure}[t]
  \centering
  \includegraphics[width=.75\linewidth]{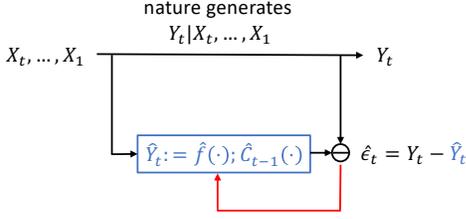}
    \caption{Unlike traditional CP methods, sequential CP methods leverage feedback (in red arrow) during prediction. In this work, we use prediction residual $\hat{\epsilon}_t=Y_t-\widehat{Y}_t$ as an example of the non-conformity score. 
    }
     \label{sequential_CP_diagram}
     \vspace{-0.1in}
\end{figure}

\section{Algorithms}\label{sec_algo}

Below, we first consider a simple split conformal prediction as a vanilla baseline approach based on traditional CP, which constructs prediction intervals without considering feedback during prediction. Then, we present the \EnbPI{} \citep{xu2021conformal} method in sequential CP as a refined approach and illustrate its limitation in using empirical quantile of past residuals. Finally, we introduce the proposed \SPCI{} as an improved algorithm for sequential CP for time series data.

\subsection{Vanilla split conformal}

One of the most commonly used conformal prediction methods is {\it split conformal} \citep{inductCP}, so we describe it as a prototypical example. First, split the indices of training data $[T]:=\{1,\ldots,T\}$ into two halves $\mathcal{I}_1$ and $\mathcal{I}_2$. Second, fit the prediction model $\hat{f}$ on $\{(X_t,Y_t),t\in \mathcal{I}_1\}$ to make point predictions $\widehat{Y}_t=\hat{f}(X_t), t \neq \mathcal{I}_1$. Third, compute \textit{non-conformity score} on $\mathcal{I}_2$, where a typical choice is the residual. Lastly, let $\mathcal{E}[\mathcal{I}_2]=\{\hat{\epsilon}_j\}_{j\in \mathcal{I}_2}$ and define the prediction interval $\Ctalpha$ for $t>T$ as
\begin{equation}\label{traditional_CP}
    [\hat{f}(X_t)+ q_{\alpha/2}(\mathcal{E}[\mathcal{I}_2]),\hat{f}(X_t)+ q_{1-\alpha/2}(\mathcal{E}[\mathcal{I}_2])],
\end{equation}
where $q_{1-\alpha}$ is the $1-\alpha$ quantile function over a set of values. In particular, the set of non-conformity scores $\{\hat{\epsilon}_j\}_{j\in \mathcal{I}_2}$ is fixed during prediction. When $(X_t,Y_t)$ are exchangeable (i.e., we can shuffle the order of these random variables without affecting the joint distribution), split conformal intervals in \eqref{traditional_CP} reaches exact finite-sample marginal coverage defined in \eqref{marg_cov_real}. However, without further distribution assumptions, split conformal intervals cannot reach valid conditional coverage in \eqref{cond_cov} \citep{cond-pi-rina}.

\subsection{EnbPI: Ensemble version using empirical residuals}

Compared to split conformal in the previous section, \EnbPI{} involves no data-splitting, trains ensemble predictors that make more accurate point predictions and utilizes feedback during prediction on test data. Thus, \EnbPI{} is more suitable than split conformal for sequential prediction interval construction. \EnbPI{} has the following three steps. First, it leverages training data as much as possible by fitting ``leave-one-out'' (LOO) ensemble prediction models $\hat{f}_t(X_t):=\phi(\{\hat{f}_b(X_t):t \notin S_b\})$, \revold{where $\phi$ denotes an arbitrary aggregation function (e.g., mean, median, etc.) over a set of scalars,} and $S_b \subset [T]$ is the bootstrap index set used to train the $b$-th bootstrap estimator $\hat{f}_b$. The point predictor on test data is defined as $\hat{f}(X_t):=\phi(\{\hat{f}_b(X_t)\})$, which aggregates all bootstrap predictions. Second, we obtain residuals using the LOO models $\hat{\epsilon}_t:=Y_t-\hat{f}_t(X_t)$. Third, it updates the past residuals during predictions so that the prediction intervals have adaptive width. For a fixed $w\geq 1$, define 
$
    \mathcal{E}_t^w:=\{\hat{\epsilon}_{t-1},\ldots,\hat{\epsilon}_{t-w}\}.
$
Then, \EnbPI{} intervals $\Ctalpha$ have the form:
\begin{equation}\label{EnbPI_interval}
 [\hat{f}(X_t)+ q_{\alpha/2}(\mathcal{E}_t^T),\hat{f}(X_t)+ q_{1-\alpha/2}(\mathcal{E}_t^T)],
\end{equation}
which \revold{utilize the past $w=T$ residuals} and greatly resemble traditional CP intervals in \eqref{traditional_CP} due to the use of empirical quantile function $q_{1-\alpha/2}$ to compute interval width.

However, \EnbPI{} intervals in \eqref{EnbPI_interval} can have limitations under dependent residuals. Note that dependent residuals lead to non-equivalence between conditional and marginal distributions of $\hat{\epsilon}_t$, namely $\hat{\epsilon}_t |\mathcal{E}_t^w \neq \hat{\epsilon}_t$ in distribution. More precisely, let $F(z|\mathcal{E}_t^w):=\bP(\hat \epsilon_t \leq z|\mathcal{E}_t^w)$ be the unknown conditional distribution function of the residual $\hat \epsilon_t$, \revold{ where we implicitly assume the conditional distribution function is invariant over time (i.e., residuals have identical conditional distributions).} Based on \eqref{EnbPI_interval},
\begin{align}
    & \bP(Y_t \in \Ctalpha|X_t)\\
     = & ~\bP(\hat \epsilon_t \in [q_{\alpha/2}(\mathcal{E}_t^T),q_{1-\alpha/2}(\mathcal{E}_t^T)]|X_t)\nonumber \\
     = & ~F(q_{1-\alpha/2}(\mathcal{E}_t^T)|\mathcal{E}_t^w)-F(q_{\alpha/2}(\mathcal{E}_t^T)|\mathcal{E}_t^w). \label{esti_prob}
\end{align}
However, the distribution function $F$ evaluated at the empirical quantiles may not yield the desired coverage. More precisely, define
\begin{equation}\label{true_quantile_Q}
    Q_t(p):=\inf\{e^* \in \R: F(e^*|\mathcal{E}_t^w) \geq p\},
\end{equation}
which is the $p$-th quantile of the residual $\hat \epsilon_t$. By definition,
\begin{equation}\label{true_quantile}
    F(Q_t(1-\alpha/2)|\mathcal{E}_t^w)-F(Q_t(\alpha/2)|\mathcal{E}_t^w)=1-\alpha.
\end{equation}
Thus, in order for \EnbPI{} intervals in \eqref{EnbPI_interval} to have the desired $1-\alpha$ coverage asymptotically, the empirical quantile must uniformly converge to the actual quantile value, namely:
\begin{equation}\label{quantile_closeness}
\sup_{p \in [0,1]}|q_{p}(\mathcal{E}_t^T)-Q_t(p)|\rightarrow 0 \text{ as } T\rightarrow \infty.
\end{equation}
However, the condition \eqref{quantile_closeness} requires strong assumptions: \citep{xu2021conformal} assumes a particular linear functional form of $Y_t|X_t$ (i.e., $Y_t=f(X_t)+\epsilon_t)$, which further needs to be consistently estimated as sample size approaches infinity. Such assumptions can impose limitations in practice.


\subsection{Proposed \SPCI\ algorithm}

Due to the limitations above by split conformal and \EnbPI, we propose \SPCI{} in Algorithm \ref{algo_SPCI} as a more general framework than both approaches. In particular, \SPCI{} directly leverages the dependency of $\hat{\epsilon}_t$ on the past residuals when constructing the prediction intervals. Based on the equivalence in \eqref{esti_prob} and the coverage property in \eqref{true_quantile}, \SPCI{} replaces the empirical quantile with an estimate by a conditional quantile estimator. Specifically, let $\hatQtz{p}$ be an estimator of the true quantile $Q_t(p)$ in \eqref{true_quantile_Q} and let $\hat{f}$ be a pre-trained point predictor, \SPCI{} intervals $\Ctalpha$ are defined as
\begin{equation}\label{SPCI_interval}
    [\hat{f}(X_t)+\hatQtz{\betaReal},\hat{f}(X_t)+ \hatQtz{1-\alpha+\betaReal}],
\end{equation}
where $\hat{\beta}$ minimizes interval width:
\begin{equation}\label{beta_hat}
    \betaReal = {\arg\min}_{\beta\in[0,\alpha]} (\hatQtz{1-\alpha+\beta}-\hatQtz{\beta}).
\end{equation}
In particular, \SPCI{} is more general than both \EnbPI{} and split conformal. If we train LOO point predictors, choose the quantile estimator $\hatQtz{\cdot}$ as the empirical quantile, and use $\betaReal=\alpha/2$, \SPCI{} in \eqref{SPCI_interval} reduces to \EnbPI{} in \eqref{EnbPI_interval}. If we follow split conformal prediction to train the point predictor $\hat{f}$, train quantile predictor $\hatQt$ on residuals from calibration set, and do no update residuals during prediction, \SPCI{} intervals reduce to the split conformal intervals in \eqref{traditional_CP}.


We particularly comment on the computational aspect of fitting conditional quantile estimators $\hatQt$, the essential step of \SPCI. To train $\hatQt$, one minimizes the pinball loss
\begin{equation}\label{quantile_loss}
    \mathcal{L}(x,\alpha)=
    \begin{cases}
    \alpha x & \text{if } x\geq 0,\\
    (\alpha-1) x & \text{if } x< 0,
    \end{cases}
\end{equation}
which depends on the significance level $\alpha$. Because \SPCI{} aims to produce intervals as narrow as possible and refits the quantile regression models at each $t$, it is important to choose quantile regression algorithms that are efficient enough in this sequential setting. In this work, we will use quantile random forest (QRF) \citep{Meinshausen2006QuantileRF} to train $\hatQt$ and establish coverage guarantees.


\begin{algorithm}[t]
\cprotect\caption{Sequential Predictive Conformal Inference (\SPCI)}
\label{algo_SPCI}
\begin{algorithmic}[1]
\REQUIRE{Training data $\{(X_t, Y_t)\}_{t=1}^T$, 
prediction algorithm $\mathcal{A}$,
significance level $\alpha$, quantile regression algorithm $\mathcal{Q}$ .
}
\ENSURE{Prediction intervals $\widehat{C}_{t-1}(X_t), t>T$}
\STATE Obtain $\hat{f}$ and  \textit{prediction} residuals $\widehat{\boldsymbol \epsilon}$ with $\mathcal{A}$ and $\{(X_t, Y_t)\}_{t=1}^T$
\FOR {$t>T$}
\STATE Use quantile regression to obtain $\hatQt\leftarrow\mathcal{Q}(\widehat{\boldsymbol \epsilon})$
\STATE Obtain prediction interval $\Ctalpha$ as in \eqref{SPCI_interval}
\STATE Obtain new residual $\hat \epsilon_t$
\STATE Update residuals $\widehat{\boldsymbol \epsilon}$ by sliding one index forward (i.e., add $\hat \epsilon_t$ and remove the oldest one)
\ENDFOR
\end{algorithmic}
\end{algorithm}

We train QRF \textit{auto-regressively} in \SPCI{} to leverage the dependency in residuals. In short, we use the past $w\geq 1$ residuals to predict the conditional quantile of the future (unobserved) residual. More precisely, suppose we have $T$ past residuals $\mathcal{E}_t^T$ available at prediction index $t$. Let $\tilde{T}:=T-w$. For $t'=1,\ldots, \tilde{T}$, define
\begin{equation}\label{QRF_data}
    \tilde{X}_{t'} :=[\hat{\epsilon}_{t'+w-1},\ldots,\hat{\epsilon}_{t'}],
    \tilde Y_{t'} := \hat{\epsilon}_{t'+w}. 
\end{equation} 
Thus, feature $\tilde X_{t'}$ contains $w$ residuals useful for predicting the conditional quantile of $\tilde Y_{t'}$, which is the residual at index ${t'}+w$. We use the feature $\tilde X_{\tilde T+1}$ to predict the conditional quantile of $\tilde Y_{\tilde T+1}$. As a result, the QRF is trained using $\Tilde{T}$ training data $(\tilde X_{t'}, \tilde Y_{t'}), t'=1,\ldots, \tilde T$. When re-fitting the QRF at each prediction index, we re-design these $\tilde T$ training data using a sliding window of most recent $T$ residuals. In our experiments, we use the Python implementation of QRF by \citep{sklearn_quantile}.

\rev{
\begin{remark}[\SPCI{} vs. Quantile regression]
    We further highlight the essential difference of \SPCI{} against quantile regression approaches.
    In general, quantile regression algorithms rely on minimizing the pinball loss for specific regression algorithms. Doing so can often lead to inaccurate results and require special hyper-parameter tuning. In general, these algorithms can also be computationally expensive to train for multiple significance levels, as the pinball loss depends on $\alpha$.
    In contrast, \SPCI{} is compatible with any user-specified point prediction model, remains distribution-free, and provides coverage guarantees (see Theorem \ref{thm_non_exchange}). Hence, \SPCI{} inherits the main benefits of CP methods. In addition, \SPCI{} leverages the dependency of non-conformity scores by fitting a QRF model of the quantiles (one can use general quantile regression models if desired). Computationally, \SPCI{} is also efficient in test time as the fitting of the QRF model does not rely on the significance level alpha.
    In practice, we find such a hybrid approach to outperform quantile regression models using deep neural networks (see Table \ref{marg_cov_real}). 
\end{remark}
}

\section{Theory}
We first show that when data are exchangeable, one can reach exact marginal coverage when using the empirical quantile function as the quantile regression predictor. We then establish asymptotic coverage upon considering the dependency of estimated residuals. \revold{For dependent residuals, we adapt the proof in \citep{Meinshausen2006QuantileRF} for independent observations, where we replace the independence assumption with stationary and decaying dependence assumptions.} Most proofs and additional theoretical details appear in Appendix \ref{proof}. 

\subsection{Under exchangeability}

\rev{We show below that \SPCI{} maintains marginal coverage when data are exchangeable. The proof is standard based on showing the marginal coverage of split conformal prediction}

\begin{proposition}[Finite-sample marginal coverage under exchangeability {\citep{inductCP}}]\label{dist_free}
Suppose the data $(X_t,Y_t), t\geq 1$ are exchangeable (e.g., independent and identically distributed). Prediction intervals obtained via Algorithm \ref{algo_SPCI_exchangeable} (i.e., a special version of \SPCI{}) satisfy
\[
\mathbb P(Y_t \in \widehat{C}_{t-1}(X_t)) \geq 1-\alpha.
\]
\end{proposition}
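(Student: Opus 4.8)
The plan is to reduce the special version of \SPCI{} in Algorithm \ref{algo_SPCI_exchangeable} to ordinary split conformal prediction and then invoke the classical exchangeability argument. As noted immediately after \eqref{beta_hat}, if we follow split conformal---training the point predictor $\hat f$ on one half of the data, taking the quantile predictor $\hatQt$ to be the empirical quantile of the residuals on the held-out calibration half, and not updating residuals during prediction---then the \SPCI{} interval \eqref{SPCI_interval} collapses exactly to the split conformal interval \eqref{traditional_CP}. Hence it suffices to establish \eqref{marginal_cov} for the interval $[\hat f(X_t)+q_{\alpha/2}(\mathcal E[\mathcal I_2]),\,\hat f(X_t)+q_{1-\alpha/2}(\mathcal E[\mathcal I_2])]$.

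The key step is conditional exchangeability of the calibration residuals. Because $\hat f$ is trained solely on $\{(X_s,Y_s):s\in\mathcal I_1\}$, it is a fixed measurable function once we condition on $\mathcal I_1$. Since the original data $(X_t,Y_t)$ are exchangeable, the calibration residuals $\{\hat\epsilon_j=Y_j-\hat f(X_j):j\in\mathcal I_2\}$ together with the test residual $\hat\epsilon_t=Y_t-\hat f(X_t)$ form an exchangeable collection conditional on $\mathcal I_1$. I would then observe that $Y_t\in\Ctalpha$ holds if and only if $\hat\epsilon_t\in[q_{\alpha/2}(\mathcal E[\mathcal I_2]),q_{1-\alpha/2}(\mathcal E[\mathcal I_2])]$, so the coverage event is controlled entirely by the rank of $\hat\epsilon_t$ among the calibration residuals.

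Finally, I would apply the standard quantile lemma for exchangeable variables: among $n+1$ exchangeable quantities the rank of the last is essentially uniform, which yields $\bP(\hat\epsilon_t>q_{1-\alpha/2}(\mathcal E[\mathcal I_2]))\le\alpha/2$ and, by the symmetric lower-tail bound, $\bP(\hat\epsilon_t<q_{\alpha/2}(\mathcal E[\mathcal I_2]))\le\alpha/2$, provided the empirical quantile is defined with the usual finite-sample inflation (the $\lceil(1-\alpha/2)(|\mathcal I_2|+1)\rceil$-th order statistic). A union bound over the two tails gives miscoverage at most $\alpha$, i.e.\ \eqref{marginal_cov}. The main subtlety---rather than a deep obstacle---is the bookkeeping of the exact definition of $q_p$ so that the one-sided bounds hold with the correct direction of inequality; assuming a continuous residual distribution (or fixing a tie-breaking convention) ensures the ranks are almost surely distinct and the bound is $\ge 1-\alpha$ as claimed.
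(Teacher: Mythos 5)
Your proof is correct, but it takes a genuinely different route from the paper's. The paper also reduces coverage to the event $\hat\epsilon_t \in [q_{\alpha/2}(\mathcal{E}[\mathcal{I}_2]),\, q_{1-\alpha/2}(\mathcal{E}[\mathcal{I}_2])]$ and invokes exchangeability of the calibration residuals together with the test residual (conditional on the half used to fit $\hat f$), exactly as you do; after that, however, it uses an averaging argument rather than a rank lemma: by exchangeability, $\bP(\hat\epsilon_t \in [q_{\alpha/2},q_{1-\alpha/2}]) = \frac{1}{|\mathcal{I}_2|}\sum_{j\in\mathcal{I}_2}\bP(\hat\epsilon_j \in [q_{\alpha/2},q_{1-\alpha/2}]) = \frac{1}{|\mathcal{I}_2|}\,\E\bigl[\sum_{j\in\mathcal{I}_2} \boldone(\hat\epsilon_j \in [q_{\alpha/2},q_{1-\alpha/2}])\bigr] = 1-\alpha$, the last equality holding by the definition of the empirical quantiles. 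Your approach instead uses the classical fact that the rank of the test residual among the $|\mathcal{I}_2|+1$ exchangeable residuals is (sub-)uniform, bounds each tail by $\alpha/2$ via the inflated $\lceil(1-\alpha/2)(|\mathcal{I}_2|+1)\rceil$-th order statistic, and finishes with a union bound. Comparing the two: the paper's argument is shorter and yields an exact $1-\alpha$ identity, but it silently treats the endpoints $q_{\alpha/2}, q_{1-\alpha/2}$ as fixed when exchanging $\hat\epsilon_t$ with $\hat\epsilon_j$, even though the swap also changes the set over which those quantiles are computed; making that step airtight requires precisely the rank/augmented-sample bookkeeping you perform. Your version is the standard rigorous one in the split-conformal literature, and it correctly delivers the one-sided finite-sample inequality $\geq 1-\alpha$ claimed in the proposition, at the price of requiring the $(|\mathcal{I}_2|+1)$-inflation in the definition of $q_p$, which the paper's interval \eqref{traditional_CP} does not explicitly impose.
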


\subsection{Beyond exchangeability}\label{theory_spci_non_exch}

The primary theoretical contribution of our work is to show the asymptotic conditional validity of \SPCI{} intervals when the quantile random forest \citep{Meinshausen2006QuantileRF} is used as the conditional quantile estimator. Specifically, we have
\[
\bP(Y_t \in \Ctalpha|X_t) \rightarrow 1-\alpha \text{ as } T\rightarrow \infty, 
\]
which by \eqref{esti_prob} and \eqref{true_quantile_Q}, is equivalent to proving
\begin{equation}\label{convergence}
    \sup_{p\in [0,1]}|\hatQtz{p}-Q_t(p)|\rightarrow 0 \text{ as } T\rightarrow \infty,
\end{equation}
where $\hatQtz{p}$ is the QRF estimator. More precisely, we want to estimate the conditional quantile values of $\tilde{Y}_{\Tilde{T}+1}$ given $\tilde{X}_{\Tilde{T}+1}$, both of which are defined in \eqref{QRF_data}. Note that \eqref{convergence} for i.i.d. observations has been proven in \citep[Theorem 1]{Meinshausen2006QuantileRF}, so that our analysis also extends the original statement therein to observations with dependency.

We follow the notation in \citep{Meinshausen2006QuantileRF} to introduce QRF. For the feature $\tilde{X}_t, t\geq 1$, assume its support $\text{Supp}(\tilde{X}_t)\subset \mathbb B \subset \R^p$. We grow the tree $T(\theta)$ with parameter $\theta$ as follows: every leaf $l=1,\ldots,L$ of a tree $T(\theta)$ is associated with a rectangular subspace $R_l \subset \mathbb B$. In particular, they are disjoint and cover the entire space $\mathbb B$: for every $x\in \mathbb B$, there is \textit{one and only one} leaf $l$, thus denoted as $l(x,\theta)$, such that $x\in R_{l(x,\theta)}$. If we grow $K$ trees, let each of them have separate parameter $\theta_k$. Now, for a given $x\in \mathbb B$ and $\tilde{T}$ observed features $\tilde X_1,\ldots,\tilde X_{\tilde T}$, we define the following weights:
\begin{align}
    k_{\theta}(l) &:= \# \{j \in \{1,\ldots, \tilde T\}: \tilde X_j \in R_{l(x,\theta)}\} \label{k_theta_l} \\
     w_t(x,\theta) &:=\frac{\boldone(\tilde X_t \in R_{l(x,\theta)})}{k_{\theta}(l)} \label{w_t_tree} \\
    w_t(x) &:=K^{-1} \sum_{k=1}^K w_t(x,\theta_k) \label{w_t_forest}
\end{align}
For interpretation, \eqref{k_theta_l} counts the ``node size'' of the leaf $l(x,\theta)$, \eqref{w_t_tree} weighs the $i$-th observation using whether $\tilde X_t$ belongs to this leaf and its node size, and \eqref{w_t_forest} weighs such weights from $K$ trees. Based on weights in \eqref{w_t_forest}, the estimated conditional distribution function $\hat F(z|x)\revold{=\hat F(z|\tilde{X}_{\tilde{T}+1}=x)}$ is defined as 
\begin{equation}\label{QRF}
   \hat F(z|x):= \sum_{t=1}^{\tilde{T}} w_t(x) \boldone(\tilde Y_t\leq z).
\end{equation}
In retrospect, the estimation in \eqref{QRF} is similar to that under fixed weights by \citep{RinaNonExchange}. The key difference is that \eqref{QRF} uses data-adaptive weights \revold{as it exploits the temporal autocorrelation of residuals}. In contrast, \citep{RinaNonExchange} uses fixed and non-adaptive weights.

To show the convergence of the estimated QRF quantile to the true value, we first have the following lemma relating the convergence of quantile estimates to the convergence of corresponding distribution functions.

\begin{lemma}\label{lem1}
    For random variable $\hat \epsilon_t$ (i.e., residual in our setup), let $F(z|x)$ be its conditional distribution function and $Q(p):=\inf \{z\in \R: F(z|x)\geq p\}$ be the $p$-th quantile, which is assumed to be unique. Let $\hat F(z|x)$ be an estimator trained on $\tilde T$ samples $\{(\tilde X_t,\tilde Y_t)\}_{t=1}^{\tilde T}$. If for all $z$ and $x$ it holds that 
    \begin{equation}\label{ptwise_converge}
        \hat F(z|x) \rightarrow F(z|x) \text{ in probability as } \tilde T \rightarrow \infty,
    \end{equation}
    then $\widehat Q(p):=\inf \{z\in \R: \hat F(z|x)\geq p\}$ satisfies 
    $\widehat Q(p) \rightarrow  Q(p)$ in probability for every $p\in (0,1)$ and $x$.
\end{lemma}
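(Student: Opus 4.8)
The plan is to deduce the convergence of the quantile $\widehat Q(p)$ from the assumed pointwise convergence of the distribution function by sandwiching $\widehat Q(p)$ between $Q(p)-\varepsilon$ and $Q(p)+\varepsilon$. The only structural facts I will use are the monotonicity of $F(\cdot\mid x)$ and of the estimator $\hat F(\cdot\mid x)$ (the latter holds because, by \eqref{QRF}, $\hat F(z\mid x)$ is a weighted sum of indicators $\boldone(\tilde Y_t\le z)$ with nonnegative weights summing to one, hence nondecreasing in $z$), together with the uniqueness of $Q(p)$. I fix $p\in(0,1)$ and $x$ throughout and fix an arbitrary tolerance $\varepsilon>0$.

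First I would exploit uniqueness to obtain a \emph{strict} bracket around level $p$. Set $a:=F(Q(p)-\varepsilon\mid x)$ and $b:=F(Q(p)+\varepsilon\mid x)$. Since $Q(p)$ is the unique solution of the quantile equation, $F(\cdot\mid x)$ crosses level $p$ strictly at $Q(p)$, so $a<p<b$; this is precisely where uniqueness enters, and it is the crux of the argument. I then define the gap $\delta:=\tfrac{1}{2}\min\{p-a,\;b-p\}>0$.

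Next I would introduce the ``good'' event on which $\hat F$ is $\delta$-close to $F$ at the two bracketing points,
\[
A:=\bigl\{\,|\hat F(Q(p)-\varepsilon\mid x)-a|<\delta\,\bigr\}\cap\bigl\{\,|\hat F(Q(p)+\varepsilon\mid x)-b|<\delta\,\bigr\},
\]
and show $A\subseteq\{|\widehat Q(p)-Q(p)|\le\varepsilon\}$. On $A$ we have $\hat F(Q(p)-\varepsilon\mid x)<a+\delta\le(a+p)/2<p$, so by monotonicity every $z\le Q(p)-\varepsilon$ satisfies $\hat F(z\mid x)<p$; hence $\{z:\hat F(z\mid x)\ge p\}$ contains no such point and $\widehat Q(p)\ge Q(p)-\varepsilon$. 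Symmetrically, $\hat F(Q(p)+\varepsilon\mid x)>b-\delta\ge(b+p)/2>p$ places $Q(p)+\varepsilon$ in that set, giving $\widehat Q(p)\le Q(p)+\varepsilon$.

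Finally I would conclude by a union bound. Since $\{|\widehat Q(p)-Q(p)|>\varepsilon\}\subseteq A^{c}$,
\[
\mathbb P\bigl(|\widehat Q(p)-Q(p)|>\varepsilon\bigr)\le \mathbb P\bigl(|\hat F(Q(p)-\varepsilon\mid x)-a|\ge\delta\bigr)+\mathbb P\bigl(|\hat F(Q(p)+\varepsilon\mid x)-b|\ge\delta\bigr),
\]
and each term tends to $0$ as $\tilde T\to\infty$ by the pointwise hypothesis \eqref{ptwise_converge} applied at the two fixed points $z=Q(p)\pm\varepsilon$. This yields $\widehat Q(p)\to Q(p)$ in probability. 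The only delicate point is the strict bracketing $a<p<b$: without uniqueness one could only guarantee $a\le p\le b$ and the sandwich would collapse. Everything else is routine monotonicity plus a finite union bound, so I expect no genuine obstacle beyond correctly invoking uniqueness.
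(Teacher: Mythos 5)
Your proof is correct, and it takes a genuinely different route from the paper's. The paper first invokes a theorem of Ridler--Rowe (1968) to upgrade the pointwise hypothesis \eqref{ptwise_converge} into \emph{uniform} convergence $\sup_{z}|\hat F(z|x)-F(z|x)|\to 0$ in probability, and then bounds $\mathbb P(|\widehat Q(p)-Q(p)|>\epsilon)$ by evaluating the discrepancy between $F$ and $\hat F$ at the \emph{random} point $\widehat Q(p)$; that step implicitly substitutes $\hat F(\widehat Q(p)|x)=p$, which for a step-function estimator such as the QRF in \eqref{QRF} holds only up to an overshoot, so the paper's chain of equalities is really a chain of inclusions that needs minor care. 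Your sandwich argument sidesteps both issues: you never evaluate anything at a random point, and you need the hypothesis \eqref{ptwise_converge} only at the two deterministic points $Q(p)\pm\varepsilon$, combined with monotonicity of $\hat F(\cdot|x)$ (which you rightly justify from the nonnegative-weights form of \eqref{QRF}) and the strict bracket $F(Q(p)-\varepsilon|x)<p<F(Q(p)+\varepsilon|x)$ coming from uniqueness --- exactly the same place uniqueness enters the paper's definition of $\delta$. What your approach buys is a more elementary, self-contained proof with no appeal to an external uniform-convergence theorem; what the paper's approach buys is the stronger intermediate conclusion of uniform convergence over $z$, which is of independent interest and sits closer to the uniform-in-$p$ statement \eqref{convergence} that Section \ref{theory_spci_non_exch} aims at. One small remark: strictness of the lower bracket $F(Q(p)-\varepsilon|x)<p$ actually follows from the infimum definition of $Q(p)$ alone; it is only the upper bracket $F(Q(p)+\varepsilon|x)>p$ that genuinely requires uniqueness (ruling out a flat piece of $F$ at level $p$), so your closing caveat is right in spirit but slightly over-attributes.
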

Thus, the crux of the remaining analyses relies on showing the point-wise convergence in \eqref{ptwise_converge} for the QRF in \eqref{QRF}. The case where all data are independent and identically distributed has been addressed in \citep[Theorem 1]{Meinshausen2006QuantileRF}. We address the more general case for dependent observations in Proposition \ref{ptwise_converge_QRF}.

\begin{proposition}\label{ptwise_converge_QRF}
    If Assumptions \ref{Assume0}---\ref{Assume3} defined in Appendix \ref{proof} hold, we obtain the point-wise convergence in \eqref{ptwise_converge} for QRF.
\end{proposition}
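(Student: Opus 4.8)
The plan is to follow the architecture of the proof of Theorem~1 in \citep{Meinshausen2006QuantileRF}, isolating the one place where independence is actually used and substituting the stationarity and decaying-dependence hypotheses there. The starting point is the centered decomposition, obtained from $\sum_{t} w_t(x)=1$,
\[
\hat F(z|x) - F(z|x) = \underbrace{\sum_{t=1}^{\tilde T} w_t(x)\bigl(\boldone(\tilde Y_t\le z) - F(z|\tilde X_t)\bigr)}_{(\mathrm I)} + \underbrace{\sum_{t=1}^{\tilde T} w_t(x)\bigl(F(z|\tilde X_t) - F(z|x)\bigr)}_{(\mathrm{II})},
\]
so that $(\mathrm{II})$ is a ``bias'' term controlled by the geometry of the forest together with a regularity condition on $F$, and $(\mathrm I)$ is a ``variance'' term that is a weighted sum of conditionally mean-zero increments. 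Showing $(\mathrm{II})\to 0$ and $(\mathrm I)\to 0$ in probability yields the claim.

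I would dispatch term $(\mathrm{II})$ essentially as in the i.i.d.\ case, since it depends on the data only through the feature marginal. Under the Lipschitz/continuity assumption on the map $x\mapsto F(z|x)$, one bounds $|(\mathrm{II})|$ by the weighted average feature distance $\sum_{t} w_t(x)\,\lVert \tilde X_t - x\rVert$, which vanishes once the forest weights concentrate on a shrinking neighborhood of $x$: the diameter of the leaf $R_{l(x,\theta)}$ tends to zero while each leaf still accumulates $k_{\theta}(l)\to\infty$ observations, per the node-size and diameter conditions imposed on the tree-growing rule. The novelty is only that the empirical cell frequencies underlying these two geometric facts are now convergences for a stationary, identically distributed feature sequence, so I would replace the i.i.d.\ law of large numbers by an ergodic (mixing) law of large numbers, which the decaying-dependence assumption supplies.

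The main work, and the main obstacle, is term $(\mathrm I)$. Conditioning on the $\sigma$-field generated by the feature sequence makes the forest weights $w_t(x)$ measurable while the increments $U_t:=\boldone(\tilde Y_t\le z)-F(z|\tilde X_t)$ remain conditionally mean-zero, so it suffices to show $\E[(\mathrm I)^2]\to 0$. Expanding the square gives the diagonal $\sum_t w_t(x)^2\,\mathrm{Var}(U_t\mid\tilde X_t)$ plus cross terms $\sum_{s\ne t} w_s(x) w_t(x)\,\mathrm{Cov}(U_s,U_t)$. The diagonal is handled exactly as before: $\max_t w_t(x)$ is bounded by the reciprocal of the smallest node size $\min_{\theta} k_{\theta}(l(x,\theta))$, which diverges, and since the weights have unit $\ell_1$ mass the diagonal is $O(\max_t w_t(x))\to 0$. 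In the i.i.d.\ proof the cross terms simply vanish; here they do not, and this is precisely where I would invoke decaying dependence. Writing $U_s,U_t$ as bounded measurable functionals of the underlying residual process, a covariance (Davydov-type mixing) inequality bounds $|\mathrm{Cov}(U_s,U_t)|$ by a function of the time gap $|s-t|$ that is summable under the assumed decay, so the total cross-term contribution is dominated by the (vanishing) $\ell_\infty$ mass of the weights times a finite sum of mixing coefficients.

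The subtlety I would flag most carefully is that the overlapping sliding-window construction in \eqref{QRF_data} forces consecutive features $\tilde X_{t'}$ to share coordinates, and that the weights $w_t(x)$ are themselves data-dependent and a priori correlated with the $U_t$. The former I would address by noting that a fixed-width window of a mixing process is again mixing, with the coefficient merely shifted by the window length, so the decay hypothesis transfers to the $(\tilde X_{t'},\tilde Y_{t'})$ process; the latter is neutralized by the conditioning device above, which is the standard step in \citep{Meinshausen2006QuantileRF} and survives verbatim under dependence. Assembling the $L^2$ bound $(\mathrm I)\to 0$ with $(\mathrm{II})\to 0$ in probability then gives the pointwise convergence \eqref{ptwise_converge} for the QRF estimator \eqref{QRF}.
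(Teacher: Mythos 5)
Your proposal follows essentially the same architecture as the paper's proof: a bias--variance splitting of $\hat F(z|x)-F(z|x)$, a Chebyshev/second-moment bound on the stochastic term whose diagonal part vanishes because the weights are $O(1/\tilde T)$ (Assumption \ref{Assume1}) and whose cross terms vanish because the indicator covariances decay in the index gap (Assumption \ref{Assume0}), and the Lipschitz hypothesis (Assumption \ref{Assume2}) combined with Meinshausen's Lemma 2 for the bias term. Two differences in packaging are worth noting. First, where you derive covariance decay from strong mixing via a Davydov-type inequality, the paper simply \emph{assumes} the needed bound: Assumption \ref{Assume0} postulates $\mathrm{Cov}\bigl(\boldone(U_i\le z[x_1]),\boldone(U_j\le z[x_2])\bigr)\le\tilde g(|i-j|)$ uniformly in $x_1,x_2$, together with the integral growth condition \eqref{g_growth}; that condition is strictly weaker than summability of $\tilde g$ (e.g.\ $\tilde g(k)=1/\log(k+1)$ qualifies), so your ``finite sum of mixing coefficients'' bound handles a special case of what the paper's cross-term estimate covers. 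Second, the paper uses a two-step decomposition rather than your one-step one: its term (a) centers indicators at the fixed threshold $F(z|x)$, and its term (b) is an indicator difference that is first shown, by a second Chebyshev argument, to converge in probability to your deterministic term $(\mathrm{II})$; your split isolates the bias immediately, which is cleaner but algebraically equivalent. Finally, you are more careful than the paper on one point: the paper cites Meinshausen's Lemma 2 (proved for i.i.d.\ features) verbatim to get $\sum_t w_t(x)\|\tilde X_t-x\|_1=o_p(1)$, without re-establishing it for dependent features, whereas you correctly flag that this step needs an ergodic/mixing law of large numbers under dependence, and you also flag the weight--indicator correlation issue that the paper's unconditional expectation computation quietly glosses over.
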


We briefly explain and discuss the necessary theoretical assumptions \ref{Assume0}---\ref{Assume3} used in proving Proposition \ref{ptwise_converge_QRF}:
\begin{itemize}
    \item Assumption \ref{Assume0}: This assumption states two things. First, the dependency of the covariance of the indicator random variables (defined over the residual quantiles) only depends on the difference in index (see Eq. \eqref{cov}). Such assumption on residual dependency resembles the weak or wide-sense stationarity assumption. Second, the value of covariances can be uniformly bounded over the conditioning variables by a function $\tilde{g}$, and there is a growth order constraint on $\tilde{g}$ (see Eq. \eqref{g_growth}). This condition is imposed to avoid strong dependency among the residuals, which prevents asymptotic consistency of the QRF estimator.
    
    \item Assumption \ref{Assume1}: This assumption requires that the weights $w_t(x)$ in QRF decay linearly with respect to the number of training samples for QRF. In practice, we often found that the weights decay at such an order.
    
    \item Assumption \ref{Assume2} and \ref{Assume3}: These distributional assumptions on the conditional quantile function follow those in QRF \citep{Meinshausen2006QuantileRF}, and they are reasonably mild. In particular, we are not assuming a particular parametric of the conditional quantile function so the results are distribution-free.
\end{itemize}

We finally obtain the asymptotic guarantee on interval coverage.

\begin{theorem}[Asymptotic conditional coverage beyond exchangeability]\label{thm_non_exchange}
Under the same assumptions as Lemma \ref{lem1} and Proposition \ref{ptwise_converge_QRF}, as the sample size $T\rightarrow \infty$, we have for any $\alpha \in (0,1)$
\begin{equation}\label{coverage}
    |\bP(Y_t \in \widehat{C}_{t-1}(X_t)|X_t)-(1-\alpha)| \overset{p}{\rightarrow} 0
\end{equation}
\end{theorem}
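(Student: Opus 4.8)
The plan is to reduce the conditional coverage statement \eqref{coverage} to the uniform quantile convergence in \eqref{convergence}, and then to establish that uniform convergence by upgrading the pointwise convergence that Lemma \ref{lem1} and Proposition \ref{ptwise_converge_QRF} already supply. The key structural observation, which I would exploit throughout, is that the true quantiles deliver exactly $1-\alpha$ coverage for \emph{every} admissible $\beta$: by the definition in \eqref{true_quantile_Q} and continuity of $F(\cdot|\mathcal{E}_t^w)$ one has $F(Q_t(p)|\mathcal{E}_t^w)=p$, so that $F(Q_t(1-\alpha+\beta)|\mathcal{E}_t^w)-F(Q_t(\beta)|\mathcal{E}_t^w)=1-\alpha$ identically, exactly as in \eqref{true_quantile}. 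This identity is what renders the data-dependent minimizer $\betaReal$ from \eqref{beta_hat} harmless.

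First I would rewrite the conditional coverage in terms of residuals. Since $Y_t=\hat f(X_t)+\hat\epsilon_t$, the event $Y_t\in\Ctalpha$ is, by the interval definition \eqref{SPCI_interval}, equivalent to $\hat\epsilon_t\in[\hatQtz{\betaReal},\hatQtz{1-\alpha+\betaReal}]$, and exactly as in the derivation of \eqref{esti_prob} this gives
\[
\bP(Y_t\in\Ctalpha\mid X_t)=F(\hatQtz{1-\alpha+\betaReal}\mid\mathcal{E}_t^w)-F(\hatQtz{\betaReal}\mid\mathcal{E}_t^w).
\]
Next I would establish the uniform quantile convergence. Proposition \ref{ptwise_converge_QRF} yields the pointwise convergence $\hat F(z|x)\to F(z|x)$ in \eqref{ptwise_converge}; feeding this into Lemma \ref{lem1} gives $\hatQtz{p}\overset{p}{\rightarrow}Q_t(p)$ for each fixed $p\in(0,1)$. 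To upgrade this to the uniform statement \eqref{convergence}, I would use that both $\hatQtz{\cdot}$ and $Q_t(\cdot)$ are monotone nondecreasing and that $Q_t$ is continuous (a consequence of the uniqueness-of-quantiles and bounded-density assumptions). This is a Polya-type argument: pointwise convergence of monotone functions to a continuous monotone limit is automatically uniform on any compact subinterval, which I would apply on some $[\delta,1-\delta]$ containing the relevant levels; such a $\delta$ exists because the width minimizer in \eqref{beta_hat} is interior whenever the residual density is bounded below on its support.

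To conclude, I would combine this uniform convergence with the Lipschitz continuity of $F(\cdot|\mathcal{E}_t^w)$ (guaranteed by a bounded conditional density, with constant $L$). For the upper endpoint,
\[
\bigl|F(\hatQtz{1-\alpha+\betaReal}\mid\mathcal{E}_t^w)-(1-\alpha+\betaReal)\bigr|\le L\,\bigl|\hatQtz{1-\alpha+\betaReal}-Q_t(1-\alpha+\betaReal)\bigr|\le L\sup_{p}\bigl|\hatQtz{p}-Q_t(p)\bigr|\overset{p}{\rightarrow}0,
\]
and likewise the lower endpoint has limit $\betaReal$. Subtracting, the random $\betaReal$ cancels and the coverage probability converges to $1-\alpha$ in probability, which is \eqref{coverage}.

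The hard part will be the second step — upgrading the pointwise-in-$p$ convergence from Lemma \ref{lem1} to uniform convergence over $p$ — together with the care needed to confine the data-dependent levels $\betaReal$ and $1-\alpha+\betaReal$ to a region where $Q_t$ is continuous and $F(\cdot|\mathcal{E}_t^w)$ has a well-behaved (bounded, non-degenerate) density. Everything downstream is bookkeeping once uniform convergence is secured, precisely because the true quantiles yield $1-\alpha$ for any $\beta$ and the $\betaReal$ terms cancel.
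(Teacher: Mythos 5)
Your proposal is correct, and its skeleton coincides with the paper's proof: both rewrite the conditional coverage as $F(\hatQtz{1-\alpha+\betaReal}|\mathcal{E}_t^w)-F(\hatQtz{\betaReal}|\mathcal{E}_t^w)$ via \eqref{esti_prob}, feed Proposition \ref{ptwise_converge_QRF} into Lemma \ref{lem1} to get $\hatQtz{p}\rightarrow Q_t(p)$ in probability, and exploit the identity $F(Q_t(1-\alpha+\beta)|\mathcal{E}_t^w)-F(Q_t(\beta)|\mathcal{E}_t^w)=1-\alpha$ for every admissible $\beta$, which neutralizes the data-dependent $\betaReal$. The genuine divergence is in the final limiting step. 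The paper invokes the continuous mapping theorem and passes to the limit directly at the random levels $\betaReal$ and $1-\alpha+\betaReal$; this is shorter, but it silently uses the pointwise-in-$p$ conclusion of Lemma \ref{lem1} at data-dependent arguments, which pointwise convergence alone does not license. Your Polya-type upgrade---monotonicity of $\hatQtz{\cdot}$ and $Q_t(\cdot)$ plus continuity of $Q_t$ (from Assumption \ref{Assume3}) giving uniform convergence on compact subintervals of $(0,1)$---is precisely the ingredient that makes evaluation at random levels rigorous, so your route patches a small rigor gap in the paper's own argument at the cost of an extra lemma. Two caveats on your version. First, your Lipschitz bound on $F(\cdot|\mathcal{E}_t^w)$ in $z$ (a bounded conditional density) is not among the paper's assumptions: Assumption \ref{Assume2} is Lipschitz in the \emph{conditioning variable}, and Assumption \ref{Assume3} gives only continuity and strict monotonicity in $z$; however, uniform continuity of $F$ on a compact neighborhood of the limiting quantiles, which Assumption \ref{Assume3} supplies, serves the same purpose, so this is cosmetic. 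Second, your claim that the minimizer $\betaReal$ in \eqref{beta_hat} is interior whenever the density is bounded below is not correct in general (for a residual distribution with monotone decreasing density the narrowest interval sits at $\beta=0$), and it is also not what you need: since $\betaReal\in[0,\alpha]$, the levels can approach $0$ and $1$, an endpoint issue that Lemma \ref{lem1} (stated for $p\in(0,1)$) does not cover and which the paper's proof shares; a clean fix is to restrict to levels in $[\delta,1-\delta]$ for the fixed $\alpha$, or to assume the residual distribution has quantiles finite and continuous up to the endpoints actually attained.
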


\subsection{Implications of results}
We discuss several implications of the results: (1) how the results are distribution-free and model-free; (2) challenges in obtaining interval convergence; (3) the generality of proving guarantees for QRF; (4) convergence analyses beyond using QRF. 

\vspace{0.1in}
\noindent \textit{Distribution-free \& model-free guarantees.}
Note that our coverage guarantee makes no explicit distributional assumptions of the residuals (e.g., density function has certain parametric form). Instead, our assumptions are on the dependency among the residuals and the regularity of the density functions of the residuals. 
On the other hand, our results also make no assumptions on the underlying data generation process of $Y_t$ given $X_t$, in contrast to the linear assumption $Y_t=f(X_t)+\epsilon_t$ in \EnbPI{} \citep{xu2021conformal}. One can also use arbitrary predictive model to obtain the residuals, rather than relying on special deep neural network architectures \citep{salinas2020deepar,lim2021temporal}.

\vspace{0.1in}
\noindent \textit{Interval convergence.} Ideally, we wish \SPCI{} intervals in \eqref{SPCI_interval} to converge in width to the oracle interval defined by $Y_t|X_t$. However, doing so requires assumptions on the inverse CDF of $Y_t|X_t$, which deviate from our focus on model-free interval construction. Even though such theoretical analyses are lacking, experiments in Section \ref{sec_exper} demonstrate that \SPCI{} improves over recent sequential conformal prediction models in many cases.

\vspace{0.1in}
\noindent \textit{Generality of QRF.}
Note that decision trees are simple functions, thus satisfying the assumptions of the \textit{Simple Function Approximation Theorem} \citep{royden1988real}. In other words, the QRF estimates can theoretically approximate those of any other quantile estimates. As a result, this can be useful if one analyzes the convergence of QRF quantile estimates for residuals with a more general dependency.  

\vspace{0.1in}
\noindent \textit{Convergence beyond using QRF.}
The convergence of quantile estimates has been a long-standing question in statistics. In our case, we are particularly interested in the quantile estimates under time-series data. In the past, several lines of work have established such results for different estimators under various assumptions on dependency. \citep{cai2002regression} studied weighted Nadaraya-Watson quantile estimates for $\alpha$-mixing sequences. \citep{biau2011sequential} proposes a nearest-neighbor strategy for stationary and ergodic data. \citep{zhou2009local} analyzed local linear quantile estimators for locally stationary time series. More analyses appear in the survey \citep{xiao2012time}.

\section{Experiments}\label{sec_exper}

We empirically demonstrate the improved performance of \SPCI{} over competing sequential CP methods \rev{and probabilistic forecasting methods} in terms of interval coverage and width. The CP baselines are \EnbPI{} \citep{xu2021conformal}, AdaptiveCI \citep{Gibbs2021AdaptiveCI}, and NEX-CP \citep{RinaNonExchange}, whose details are in Appendix \ref{append_exp}. 
The two probabilistic forecasting methods are DeepAR \citep{salinas2020deepar} and TFT \citep{lim2021temporal}. 
In all experiments, we obtain LOO point predictors $\hat{f}$ and prediction residuals $\widehat{\boldsymbol \epsilon}$ as in \EnbPI{}. Official implementation can be found at \url{https://github.com/hamrel-cxu/SPCI-code}.

\subsection{Simulation}\label{sec:simulation}

We \revold{first} compare \SPCI{} with \EnbPI{} on non-stationary and/or heteroskedastic time-series. \revold{We then compare \SPCI{} with NEX-CP on data with distribution drifts and change-points under the setting described in \citep{RinaNonExchange}.} Details on data simulation are in Appendix \ref{append_simul}.

\vspace{0.1in}
\noindent \revold{\textit{(1) Comparison with \EnbPI{}}.} Given a feature $X_t$, we specify the true data-generating process as 
$Y_t=f(X_t)+\epsilon_t.$
We simulate two types of time-series data. The first considers non-stationary (Nstat) time-series. The second considers heteroskedastic (Hetero) time-series in which the variance of $\epsilon_t$ depends on $X_t$.

Table \ref{simul_results} compares \EnbPI{} with \SPCI{}, where both use the random forest regression model to fit the point estimator $\hat{f}$. We see clear improvement of \SPCI{}. We suspect the improvement lies in the more adaptive and accurate calibration of quantile values of residual distributions in prediction.

\begin{table}[!t]
    \caption{Simulation: \texttt{EnbPI} vs. \texttt{SPCI} on simulated time-series with $\alpha=0.1$. \texttt{SPCI} outperforms \texttt{EnbPI} in terms of interval width without sacrificing valid coverage.}
    \label{simul_results}
    \centering
    \resizebox{\linewidth}{!}{
    \begin{tabular}{lllll}
    \toprule
    {} &    Nstat coverage &       Nstat width & Hetero coverage &   Hetero width\\
    \midrule
    SPCI  &                 0.94 (2.04e-3) &             11.23 (3.37e-2) &                   0.89 (9.43e-3) &               24.09 (8.27e-1) \\
    EnbPI &                 0.91 (1.11e-3) &             25.22 (2.84e-2) &                   0.92 (1.18e-2) &               25.84 (3.47e-1) \\
    \bottomrule
    \end{tabular}
    }
    \vspace{-0.05in}
\end{table}

\begin{table}[!b]
    \caption{\revold{Simulation: NEX-CP vs. \texttt{SPCI} on simulated time-series with 90\% target coverage. Entries in the bracket indicate standard deviation over ten trials where data are re-generated. The symbol * denotes results from \citep[Table 1]{RinaNonExchange}. Results from the second row are based on $\alpha=0.09$ (dist. shift) and $\alpha=0.075$ (change-point).
    }}
    \label{simul_results_with_NEXCP}
    \centering
    \resizebox{\linewidth}{!}{
    \begin{tabular}{lllll}
    \toprule
    {} &    Drift coverage &       Drift width & Change coverage &   Change width\\
    \midrule
    SPCI  &                 0.89 (5.04e-3) &             3.33 (4.17e-2) &                   0.87 (2.75e-3) &               3.85 (4.12e-2) \\
    SPCI, adjusted $\alpha$ &                 0.90 (4.63e-3) &             3.43 (4.43e-2) &                   0.90 (3.71e-3) &               4.18 (4.89e-2) \\
    NEX-CP* &                 0.91  &             3.45 &                   0.91 &               4.13 \\
    \bottomrule
    \end{tabular}
    }
\end{table}

\vspace{0.1in}
\revold{\noindent \textit{(2) Comparison with NEX-CP}. We consider data with distribution drift and changepoints, where data are simulated according to examples in \citep{RinaNonExchange}. 

Table \ref{simul_results_with_NEXCP} shows competitive results of both methods. We notice slight under-coverage by \SPCI{} under both settings, despite the much narrower intervals by \SPCI{}. When we slightly lower the significance level $\alpha$, \rev{which is held constant when constructing all intervals,} \SPCI{} maintains valid coverage with comparable interval widths as NEX-CP.
Figure \ref{rolling_cov_simul} visualizes rolling coverage and width after a burn-in period, with a rolling window of 50 samples.
The results are similar to the best model in \citep[Figure 2]{RinaNonExchange}. In Appendix \ref{append_simul}, we further explain why \SPCI{} tends to under-cover in these settings before $\alpha$ adjustment.
}

\begin{figure}[!b]
    \vspace{-0.05in}
    \begin{minipage}{\linewidth}
        \centering 
        \includegraphics[width=\textwidth]{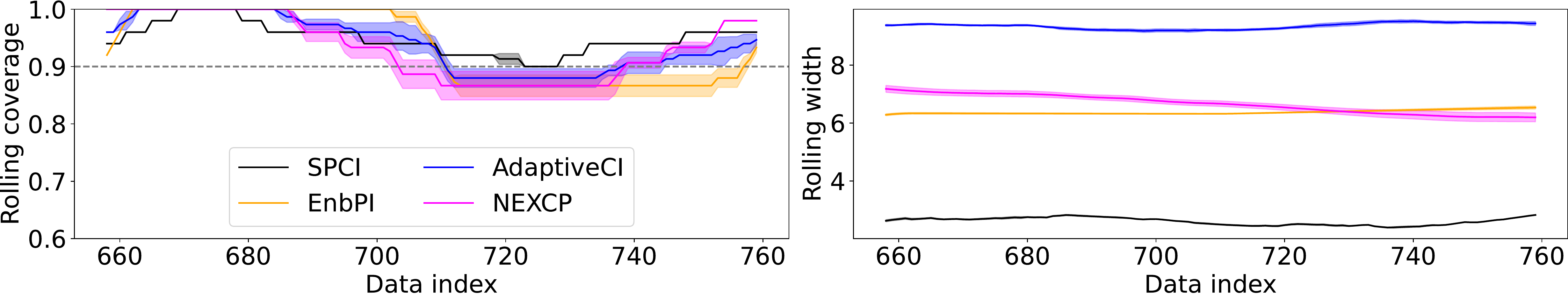}
    \subcaption{Wind}
    \end{minipage}
    \begin{minipage}{\linewidth}
        \centering 
        \includegraphics[width=\textwidth]{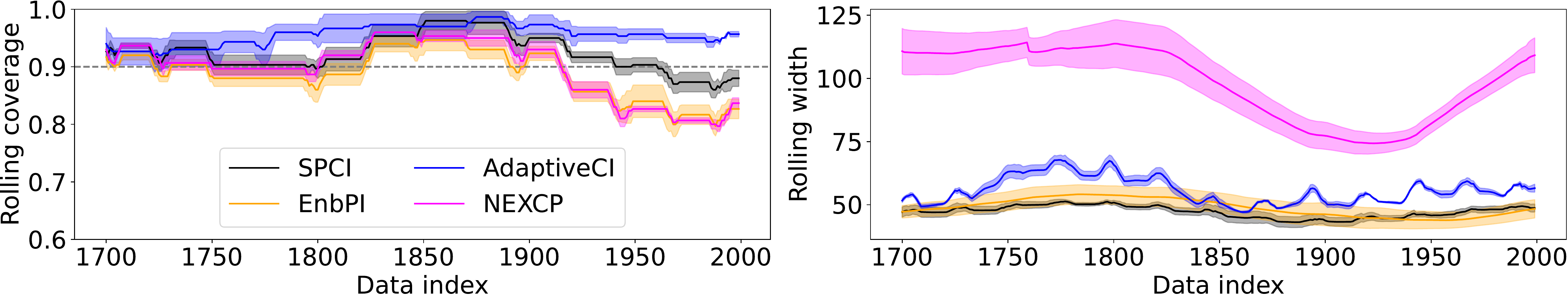}
        \subcaption{Solar}
    \end{minipage}
    \begin{minipage}{\linewidth}
        \centering 
        \includegraphics[width=\textwidth]{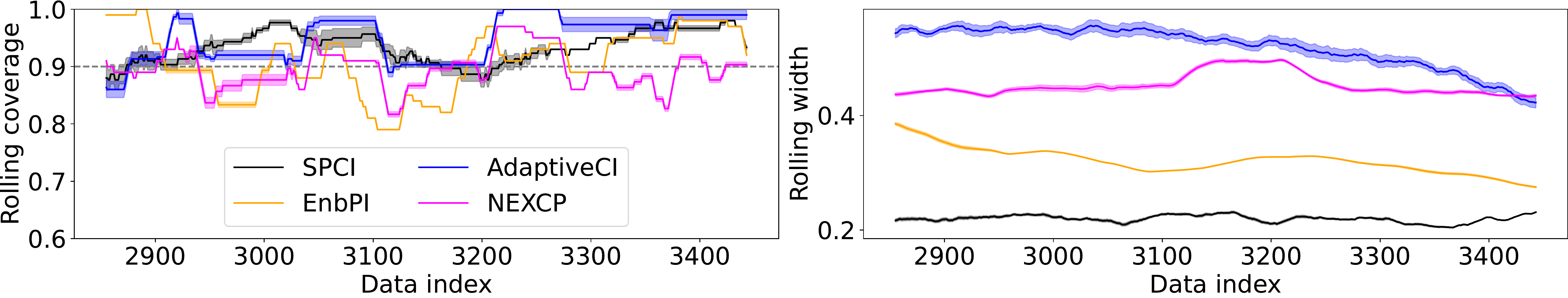}
        \subcaption{Electric}
    \end{minipage}
    \vspace{-0.1in}
    \caption{Rolling coverage and interval width over three real time series by different methods. \SPCI{} in black not only yields valid rolling coverage but also consistently yields the narrowest prediction intervals. Furthermore, the variance of \SPCI{} results over trials is also small, as shown by the shaded regions over coverage and width results.}
    \label{rolling_cov_real}
\end{figure}

\begin{table*}[!t]
    \caption{Marginal coverage and width by all methods on three real time series. The target coverage is 0.9, and entries in the bracket indicate standard deviation over three independent trials. \SPCI{} outperforms competitors with a much narrower interval width and does not lose coverage.}
    \label{marg_cov_real}
    \centering
    \resizebox{\textwidth}{!}{
    \begin{tabular}{lllllll}
    \toprule
    {} &    Wind coverage &       Wind width & Electric coverage &   Electric width &   Solar coverage &        Solar width \\
    \midrule
    SPCI       &  0.95 (1.50e-2) &  2.65 (1.60e-2) &   0.93 (4.79e-3) &  0.22 (1.68e-3) &  0.91 (1.12e-2) &   47.61 (1.33e+0) \\
    EnbPI      &  0.93 (6.20e-3) &  6.38 (3.01e-2) &   0.91 (6.84e-4) &  0.32 (9.11e-4) &  0.88 (4.25e-3) &   48.95 (3.38e+0) \\
    AdaptiveCI &  0.95 (5.37e-3) &  9.34 (3.56e-2) &   0.95 (1.81e-3) &  0.51 (7.25e-3) &  0.96 (1.39e-2) &   56.34 (1.15e+0) \\
    NEX-CP      &  0.96 (8.21e-3) &  6.68 (7.73e-2) &   0.90 (2.05e-3) &  0.45 (2.16e-3) &  0.90 (7.73e-3) &  102.80 (5.25e+0) \\
    \hline
    DeepAR &  0.95 (5.32e-3) & 6.86 (7.86e-3) & 0.91 (3.45e-3) & 0.62 (2.56e0-3) & 0.92 (5.35e-3) & 80.23 (4.94e+0) \\
    TFT      &  0.92 (6.34e-2) & 7.56 (5.34e-3)
 & 0.95 (2.34e-2)& 0.66 (2.34e-3)	 & 0.93 (2.84e-3) & 74.82 (4.23e+0) \\
    \bottomrule
    \end{tabular}
    }
    \vspace{-0.05in}
\end{table*}

\subsection{Real-data examples}\label{sec_real_data}

We primarily consider three real time-series in this section, whose details are in Appendix \ref{append_exp}. We first compare the marginal coverage and width of \SPCI{} against baseline methods. We then examine the rolling coverage and width of each method to assess their stability during prediction. We lastly apply \SPCI{} on a more challenging multi-step ahead inference case to illustrates its usefulness. We fix $\alpha=0.1$ and use the first 80\% (resp. rest 20\%) data for training (resp. testing). For \SPCI{} and \EnbPI, we use the random forest regression model with 25 bootstrap models.

\vspace{0.1in}
\noindent \textit{(1) Marginal coverage and width.} Table \ref{marg_cov_real} shows the marginal coverage and width of all methods on the three time series. While all methods nearly maintain validity at $\alpha=0.1$, \SPCI{} yields significantly narrower intervals, especially on the wind speed prediction data. Such results illustrate the advantages of fitting conditional quantile regression on residuals for width calibration and training LOO regression predictors for point prediction. 

\rev{In the appendix, Table \ref{stock_results} further compares \SPCI{} against AdaptiveCI on stock market return data, which are similar to ones used in \citep{Gibbs2021AdaptiveCI}. We show that \SPCI{} always maintains valid $1-\alpha$ coverage and yields narrower intervals than Adaptive CI.}

\vspace{0.1in}
\noindent \textit{(2) Rolling coverage and width.} Besides the marginal metric, we provide further insights into the dynamics of prediction intervals. Figure \ref{rolling_cov_real} visualizes the rolling coverage and width of each method, where the metric is computed over a rolling window of size 100 (resp. 50) for the solar and electricity (resp. wind) datasets. The results first show that \SPCI{} barely loses rolling coverage when competing methods (e.g., \EnbPI) can fail to do so. Secondly, \SPCI{} intervals are adaptive: they are wider or narrower depending on the data index, which likely reflects higher or less uncertainty in test data. Thirdly, \SPCI{} intervals are evidently narrower than those by competing methods. Lastly, \SPCI{} rolling results have less variance than others such as NEX-CP.

\begin{figure}[!t]
    \centering
    \begin{minipage}{0.48\linewidth}
        \includegraphics[width=\textwidth]{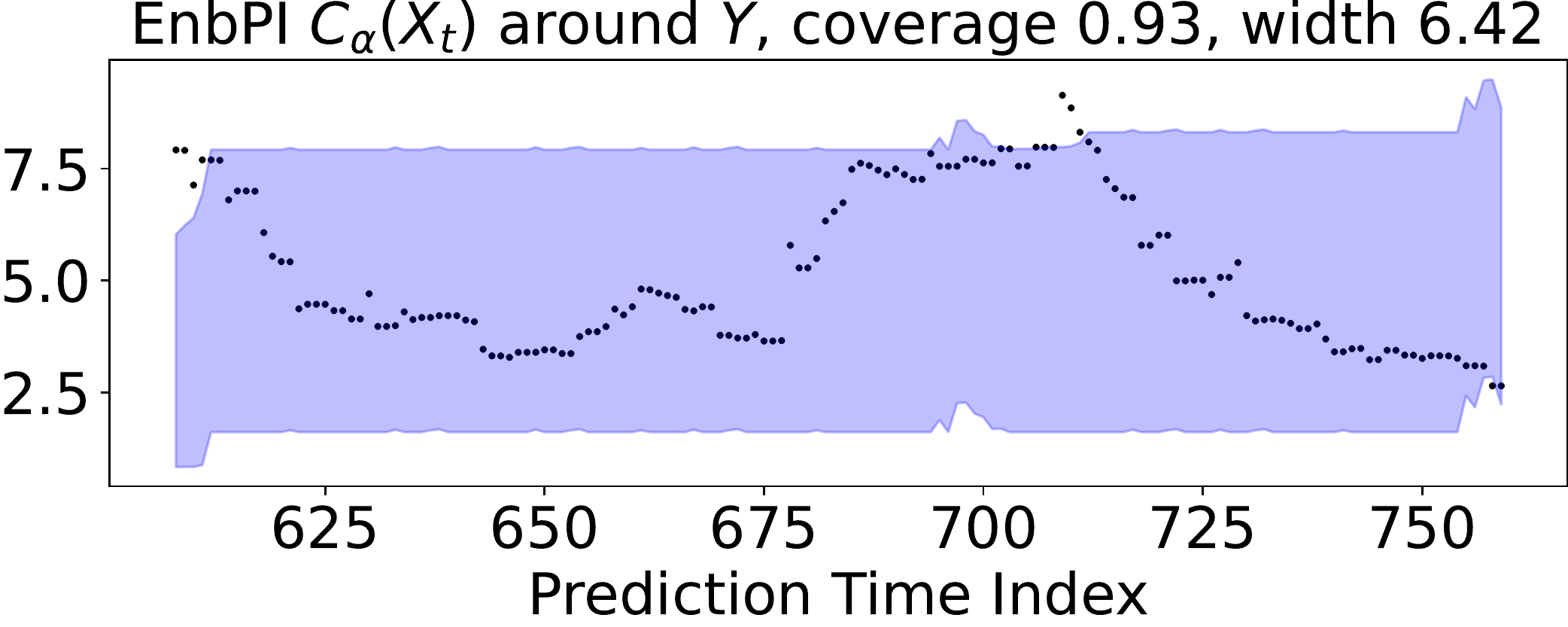}
        \subcaption{EnbPI, 1 step ahead}
    \end{minipage}
    \begin{minipage}{0.48\linewidth}
        \centering \includegraphics[width=\textwidth]{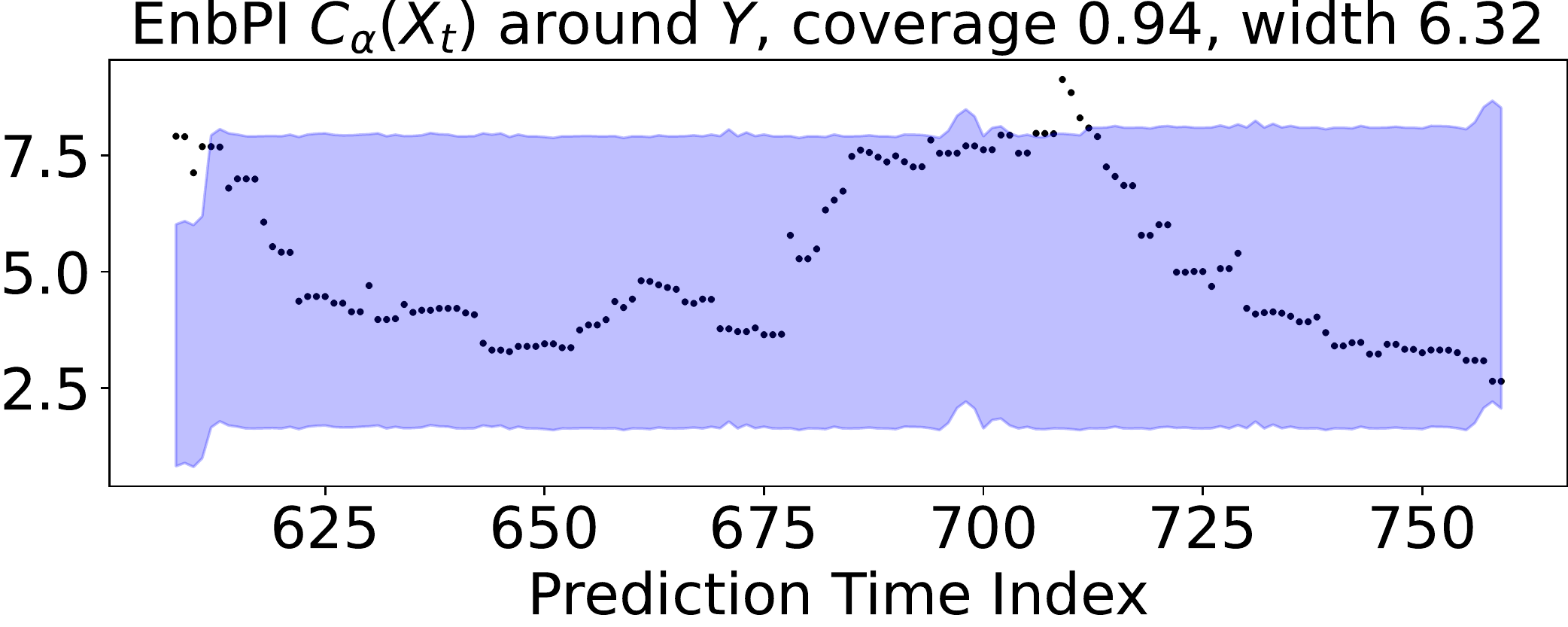}
        \subcaption{EnbPI, 4 steps ahead}
    \end{minipage}
    
    \begin{minipage}{0.48\linewidth}
       \centering \includegraphics[width=\linewidth]{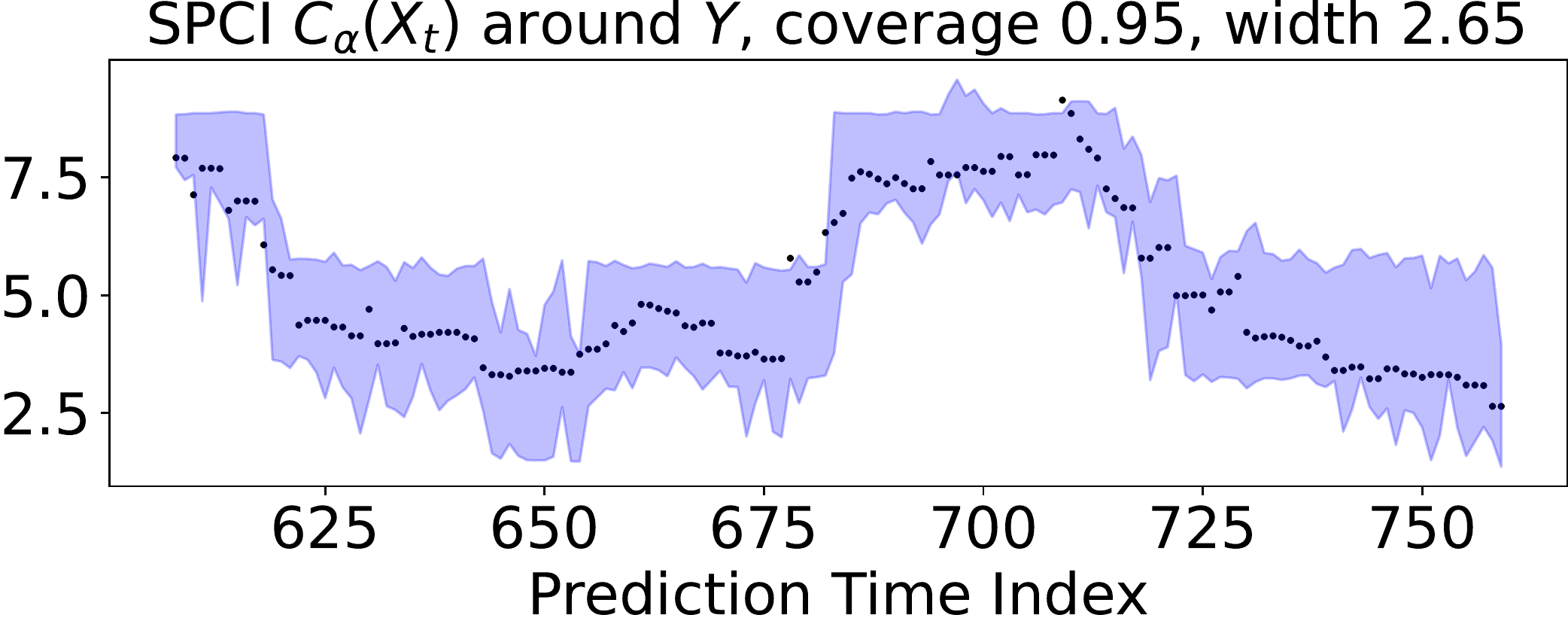}
        \subcaption{SPCI, 1 step ahead}
    \end{minipage}
    \begin{minipage}{0.48\linewidth}
       \centering \includegraphics[width=\linewidth]{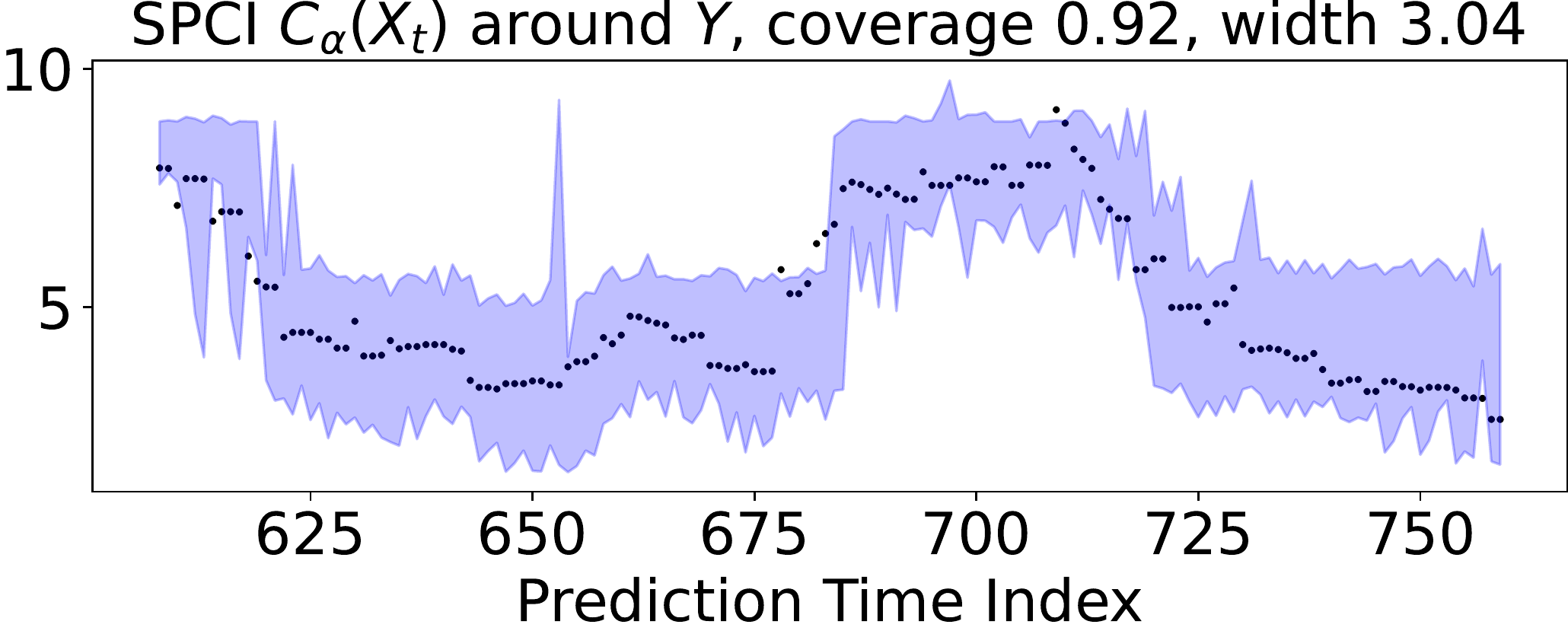}
        \subcaption{SPCI, 2 step ahead}
    \end{minipage}
    
    \begin{minipage}{0.48\linewidth}
       \centering \includegraphics[width=\linewidth]{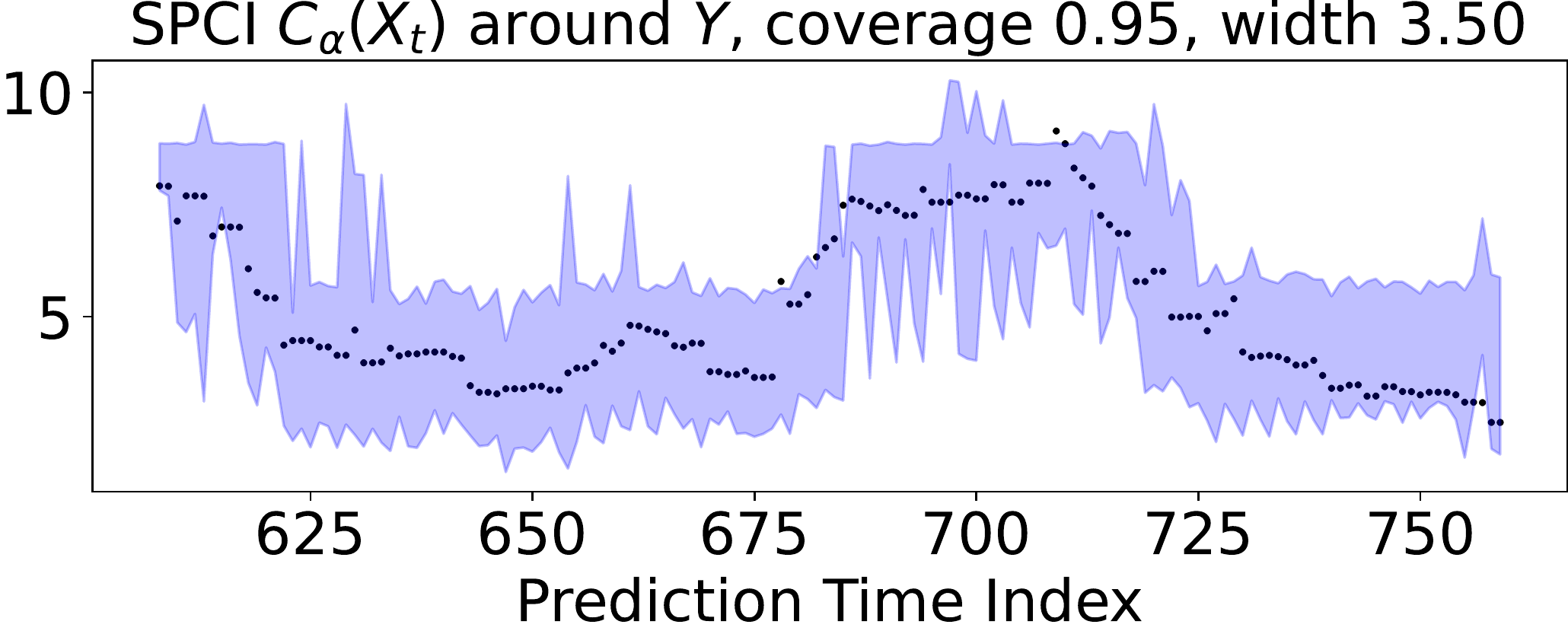}
        \subcaption{SPCI, 3 step ahead}
    \end{minipage}
    \begin{minipage}{0.48\linewidth}
       \centering \includegraphics[width=\linewidth]{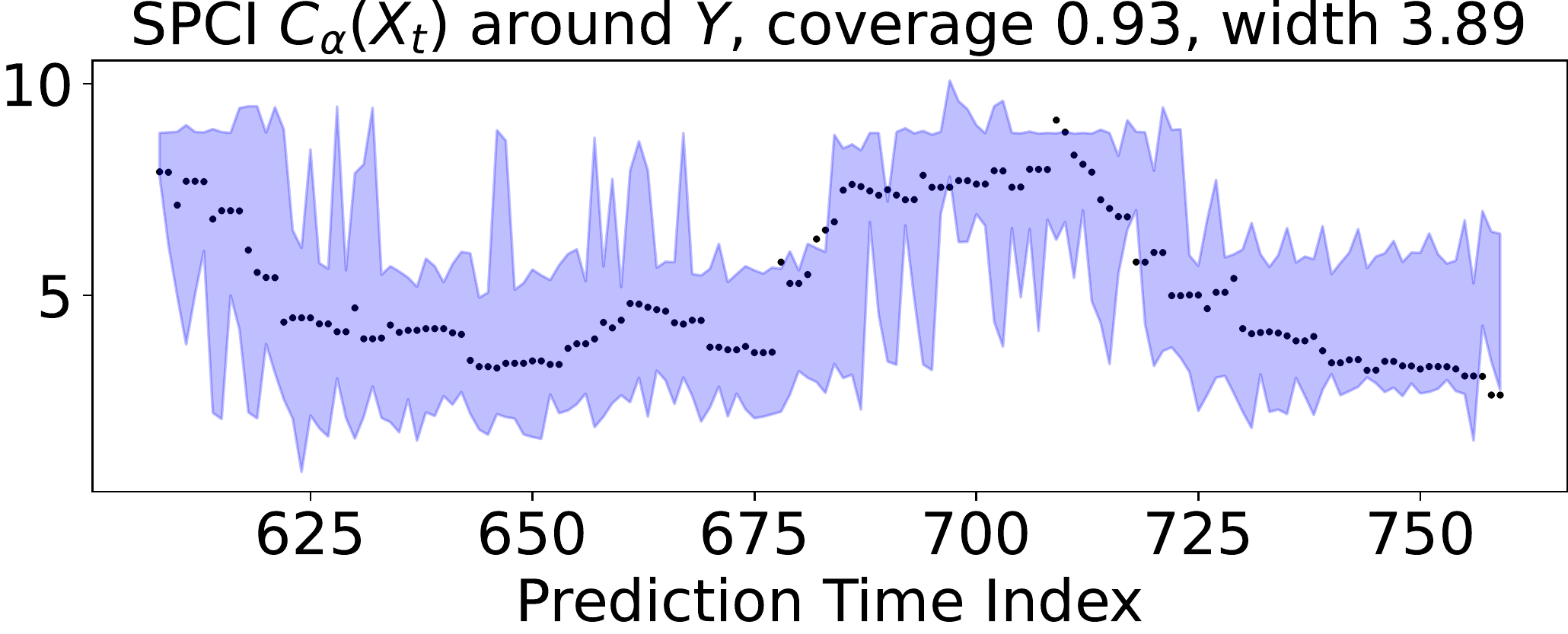}
        \subcaption{SPCI, 4 step ahead}
    \end{minipage}
   \cprotect \caption{Multi-step ahead prediction interval construction by \SPCI{} and \EnbPI{} on wind speed data. Compared to \EnbPI{} results in subfigures (a) and (b), \SPCI{} intervals are much narrower and more adaptive---\SPCI{} intervals follow the trajectory of the time-series whereas \EnbPI{} ones are overly conservative. In addition, \SPCI{} interval increase in width as the predictive horizon increases, reflecting the existence of more uncertainty in long horizons.}
    \label{multi_step_experiment}
    \vspace{-0.25in}
\end{figure}
\vspace{0.1in}
\noindent \textit{(3) Multi-step predictive inference.} In practice, it is often desirable and important to construct $S>1$ prediction intervals at once. This is a challenging problem for \SPCI{} since it involves estimating the conditional \textit{joint} distribution of $S$ residuals ahead. We thus modify \SPCI{} to tackle this problem through a ``divide-and-conquer`` approach. Specifically, we apply \SPCI{} $S$ times on lagged training data $(X_t,Y_{t+s}), s=0,\ldots,S-1$, so that we obtain $S$ fitted QRF estimators to compute the $S$ prediction intervals simultaneously. Additional details including the motivation and algorithm appear in Appendix \ref{append_multi_step}.

Figure \ref{multi_step_experiment} compares \SPCI{} with \EnbPI{} on the wind dataset in terms of multi-step ahead coverage and width. We compare with \EnbPI{} because it supports multi-step ahead prediction in the algorithm, although each batch of $S-$step ahead intervals have the same width by construction. We first note that \EnbPI{} intervals are too wide and non-adaptive, as 4-step ahead intervals may even be narrower than 1-step ahead ones. In contrast, \SPCI{} intervals closely follow the trajectory of actual data and are more adaptive: $S-$step ahead intervals with larger $S$ yield wider intervals on average. This increase in width is expected because there are greater uncertainty when predicting more prediction intervals simultaneously.

\vspace{-0.1in}
\section{Conclusions}

In this work, we propose \SPCI, a general framework for constructing prediction intervals for time series. Similar to existing conformal prediction methods, \SPCI{} is model-free and distribution-free, making it applicable to any time series with arbitrary predictive models. Unlike existing CP methods, \SPCI{} fits quantile regression models on \textit{residuals} to utilize temporal dependency among residuals to achieve more adaptive confidence intervals and better coverage. Theoretical analyses verify the asymptotic valid conditional coverage by \SPCI. Experimental results consistently show improved performance by \SPCI{} over existing sequential CP methods. 

In the future, we aim to extend \SPCI{} for constructing confidence regions for multi-variate time-series, by further exploiting the dependency among individual uni-variate time-series and designing non-conformity scores that enable efficient interval construction. How to develop the multi-step \SPCI{} in Algorithm \ref{algo_multi_step} to more precisely capture the joint distribution of future residuals is also a promising direction.

\vspace{-0.1in}
\section*{Acknowledgement}
\vspace{-0.05in}
This work is partially supported by an NSF CAREER CCF-1650913, and NSF DMS-2134037, CMMI-2015787, CMMI-2112533, DMS-1938106, and DMS-1830210.


\clearpage
\bibliography{jmlr_to_SPCI}
\bibliographystyle{icml2023}

\newpage
\appendix
\onecolumn
\setcounter{table}{0}
\setcounter{figure}{0}
\renewcommand{\thetable}{A.\arabic{table}}
\renewcommand{\thefigure}{A.\arabic{figure}}

\section{Proof} \label{proof}

\begin{proof}[Proof of Proposition \ref{dist_free}]
    The proof is standard in conformal prediction literature based on an exchangeability argument. By \eqref{traditional_CP}, we know that 
    \[
        \bP(Y_t \in \widehat{C}_{t-1}(X_t))=\bP(\hat{\epsilon}_t \in [q_{\alpha/2}(\{\mathcal{E}[\mathcal{I}_2]), q_{1-\alpha/2}(\mathcal{E}[\mathcal{I}_2])]).
    \]
    By exchangeability of the original data and the fact that $\hat{f}$ is trained on $(X_t,Y_t),t\in \mathcal{I}_1$, we have $\mathcal{E}[\mathcal{I}_2]=\{\hat{\epsilon}_j\}_{j\in \mathcal{I}_2}$ and $\hat{\epsilon}_t$ are exchangeable. For $p\in [0,1]$, let $q_p:=q_{\alpha/2}(\{\hat{\epsilon}_j\}_{j\in \mathcal{I}_2})$. Thus, by exchangeability, we have
    \begin{align*}
    & \bP(\hat{\epsilon}_t \in [q_{\alpha/2}, q_{1-\alpha/2}]) \\
    = &  \frac{1}{|\mathcal{I}_2|} \sum_{j\in \mathcal{I}_2}\bP(\hat{\epsilon}_j \in [q_{\alpha/2}, q_{1-\alpha/2}]) \\
    = & \frac{1}{|\mathcal{I}_2|} \mathbb{E}\left[\sum_{j\in \mathcal{I}_2} \boldone(\hat{\epsilon}_j \in [q_{\alpha/2}, q_{1-\alpha/2}]) \right]
    =1-\alpha,
    \end{align*}
    where the last equality holds by the definition of the interval $[q_{\alpha/2}, q_{1-\alpha/2}]$.
\end{proof}

\begin{proof}[Proof of Lemma \ref{lem1}]
    First, by \citep[Theorem 1, p.127-128]{RidlerRowe1968AGC}, we know that \eqref{ptwise_converge} implies 
    \begin{equation}\label{unif_converge}
        \sup_{z \in \R} |\hat F(z|x) - F(z|x) | \rightarrow 0 \text{ in probability.}
    \end{equation}
    Recall that $Q(p)$ is unique. Thus, for any $x$, there exists $\epsilon=\epsilon(x)>0$ such that 
    \[
    \delta=\delta(\epsilon):=\min \{p-F(Q(p)-\epsilon|x), F(Q(p)+\epsilon|x)-p\} > 0.
    \]
    Namely, there exists a small pertubation of $Q(p)$ whereby the change in the value of the distribution function is at least positive. Thus, we have that 
    \begin{align*}
        \mathbb P(|\widehat Q(p)-Q(p)|>\epsilon) 
        &\overset{(i)}{=} \mathbb P(|F(\widehat Q(p)|x)-p|>\delta) \\
        &=\mathbb P(|F(\widehat Q(p)|x)- \hat F(\widehat Q(p)|x)|>\delta) \\
        & \leq \mathbb P(\sup_{z\in \R} |F(z|x)-\hat F(z|x)|> \delta).
    \end{align*}
    Note that (i) holds because the event $|\widehat Q(p)-Q(p)|>\epsilon$ means that $\widehat Q(p)$ is at least $\epsilon$ far away from $Q(p)$. By monotonicity of the distribution function $F$, this event implies the occurrence of the event $|F(\widehat Q(p)|x)-p|>\delta$. 

    Now, \eqref{unif_converge} implies the convergence of estimated quantile values, hence finishing the proof.
\end{proof}

To prove Proposition \ref{ptwise_converge_QRF}, we need several assumptions followed by interpretation and examples.
\begin{assumption}\label{Assume0}
Define $U_t:=F(\tilde Y_t|X=\tilde X_t)$ as the quantile of observations $\tilde Y_t$ conditioning on the observed feature $\tilde X_t$, where $U_t\sim \text{Unif}[0,1]$. For a $x\in \mathcal{B}:=Supp(\{\tilde X_t\}_{t\geq 1})$, define the scalar $z[x]:=F(z|X=x)$. Given
\[
g(i,j,x_1,x_2):=\text{Cov}(\boldone(U_i\leq z[x_1]), \boldone(U_j\leq z[x_2])),
\]
we require that for any pair of $x_1,x_2 \in \mathbb B$, 
\begin{align}
   &  g(i,j,x_1,x_2)=g(|i-j|,x_1,x_2) \ \text{for} \ i\neq j. \label{cov}
\end{align}
In addition, \revold{there exists $\tilde g$ such that}
\begin{align} 
    & \revold{g(k,x_1,x_2) \leq \tilde g(k) \ \forall x_1,x_2 \in \mathbb B, k\geq 1} \label{bound_g} \\
    & \lim_{\tilde T\rightarrow \infty}  \left[\int_1^{\tilde T} \int_1^x \tilde g(u)du dx\right]/\tilde T^2 \rightarrow 0. \label{g_growth}
\end{align}
\end{assumption}

In other words, \eqref{cov} assumes that the covariance of the indicator random variables only depends on the difference in index, where this assumption appears widely in the \textit{weak or wide-sense stationary} processes. The difference is that we do not require constant mean values of the indicator variables. In fact, constant mean is impossible, as $\mathbb E[\boldone(U_t\leq z[x])]=z[x]$, whose value changes depending on the conditioning value $x$.
Meanwhile, there is a function $\tilde g(k)$ in \eqref{bound_g} bounding the covariance uniformly over pairs of values $x_1, x_2$, and \eqref{g_growth} further assumes a restriction on the order of growth of the function $\tilde g(k)$.
Below are examples of $\tilde g(k)$ for which \eqref{g_growth} holds and we can also characterize the decay rate of \eqref{g_growth}.
\begin{example}[Finite memory]\label{ex_finite}
For some cutoff index $s \in \mathbb Z$ and constants $\{c_1,\ldots,c_s\}$,
    $$\tilde g(k)=\begin{cases}
         c_k & k \leq s\\
         0 & k>s
    \end{cases}$$
\end{example}
Showing $\tilde g(k)$ in Example \ref{ex_finite} satisfies \eqref{g_growth} is trivial, with decay rate $O(1/\tilde T^2)$. This example appears in stochastic processes with finite memory.

\begin{example}[Linear decay]\label{ex_linear}
    For every $k\geq 1$,
    $\tilde g(k)=\frac{1}{k^p}, p\geq 1$.
\end{example}
Example \ref{ex_linear} is weaker than Example \ref{ex_finite}. To characterize the decay rate, we see that
\begin{align*}
    \int_1^{\tilde T} \int_1^x \tilde g(u)du dx
    &\leq \int_1^{\tilde T} \int_1^x 1/u du dx \\ 
    &=\int_1^{\tilde T} \log(x) dx = \tilde T(\log \tilde T -1).
\end{align*}
Thus, $\tilde T^{-2}\int_1^{\tilde T} \int_1^x \tilde g(u)du dx\leq\frac{\tilde T(\log \tilde T -1)}{\tilde T^2}=O(\log(\tilde T)/\tilde T)$. Hence, \eqref{g_growth} is proven for Example \ref{ex_linear}.
\begin{example}[Logarithmic decay]\label{ex_log}
For every $k\geq 1$,
    $\tilde g(k) = \left[ \frac{1}{\log(k+1)}\right]^p, p \geq 1$.
\end{example}
Example \ref{ex_log} is weaker than the above two examples as it imposes a weaker decay order on the covariance. Lemma \ref{small_lemma} presents the proof of \eqref{g_growth} for this example, which decays at the order of $O(\frac{1}{2\log \tilde T})$. In general, we wish to show \eqref{g_growth} in this example when $p\in (0,1)$. However, doing so is difficult as the analysis of the integral $\int_1^{\tilde T} \int_1^x [\frac{1}{\log(u+1)}]^p dudx$ is complicated. Furthermore, note that $\log(u+1)^p \rightarrow 1$ as $p\rightarrow 0$, so this integral tends to $\tilde T^2/2$, whereby \eqref{g_growth} cannot be obtained for small enough $p$.
\begin{lemma}\label{small_lemma}
For $p\geq 1$, we have
\[\lim_{\tilde T\rightarrow \infty} \left[\int_1^{\tilde T} \int_2^x \frac{1}{\log(u)^p} dudx\right]/\tilde T^2 = O\left(\frac{1}{2\log \tilde T}\right).\]

\end{lemma}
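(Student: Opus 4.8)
The plan is to collapse the nested integral by controlling the inner integral first and then the outer one, each time via a standard asymptotic-equivalence argument. Write $g(x) := \int_2^x (\log u)^{-p}\,du$, so that the quantity of interest is $I(\tilde T) := \int_1^{\tilde T} g(x)\,dx$ and the goal is to bound $I(\tilde T)/\tilde T^2$. First I would establish the logarithmic-integral-type asymptotic $g(x) \sim x/(\log x)^p$ as $x\to\infty$. Since both $g(x)$ and $x/(\log x)^p$ diverge, I would apply L'Hôpital to their ratio: the numerator has derivative $(\log x)^{-p}$, while $\frac{d}{dx}\left[x(\log x)^{-p}\right] = (\log x)^{-p}\bigl(1 - p/\log x\bigr)$, so the ratio of derivatives is $\bigl(1 - p/\log x\bigr)^{-1}\to 1$, giving the claimed equivalence with the sharp constant $1$.

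Second, I would feed this into the outer integral. Using $g(x)\sim x/(\log x)^p$, it suffices to find the asymptotic of $\int^{\tilde T} x/(\log x)^p\,dx$. Applying L'Hôpital once more against the candidate $\tilde T^2/\bigl(2(\log \tilde T)^p\bigr)$, the numerator's derivative is $\tilde T/(\log \tilde T)^p$ and the denominator's derivative is $\frac{\tilde T}{(\log \tilde T)^p}\bigl(1 - p/(2\log \tilde T)\bigr)$, whose ratio again tends to $1$. Hence $I(\tilde T)\sim \tilde T^2/\bigl(2(\log \tilde T)^p\bigr)$, and dividing by $\tilde T^2$ yields $I(\tilde T)/\tilde T^2 \sim 1/\bigl(2(\log \tilde T)^p\bigr)$, which in particular tends to $0$.

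Finally, for $p\geq 1$ and $\tilde T$ large enough that $\log \tilde T \geq 1$, we have $(\log \tilde T)^p \geq \log \tilde T$, so $1/\bigl(2(\log \tilde T)^p\bigr) \leq 1/(2\log \tilde T)$, which is exactly the stated rate $O\bigl(1/(2\log \tilde T)\bigr)$. This also makes transparent why $p\geq 1$ is required: for $p<1$ the factor $(\log \tilde T)^{-p}$ decays strictly more slowly, consistent with the remark preceding the lemma that \eqref{g_growth} can fail for small $p$.

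I expect the main care point to be the first step, namely obtaining the inner asymptotic with the \emph{sharp} constant $1$ rather than a crude upper bound. A naive split at $\sqrt{x}$ (bounding $\log u \geq \tfrac12\log x$ on $[\sqrt x, x]$) would only give $g(x) = O\bigl(x/(\log x)^p\bigr)$ with a spurious factor $2^p$, which would contaminate the explicit constant $\tfrac12$ in the conclusion. Pinning the constant therefore relies on the L'Hôpital argument and on checking the diverging-denominator hypothesis at each stage; once the inner equivalence is in hand, the outer estimate is a routine repetition of the same device.
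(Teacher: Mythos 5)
Your proof is correct, and it reaches the lemma's conclusion by a genuinely different, more elementary route than the paper's. The paper treats $p=1$ exactly by recognizing the inner antiderivative as the logarithmic integral $\mathrm{li}(x)=Ei(\log x)$ and quoting the asymptotic expansion $Ei(x)=\frac{e^x}{x}\bigl(1+O(1/x)\bigr)$ of the exponential integral, so that the inner integral behaves like $x/\log x$ and the outer one like $Ei(2\log \tilde T)\approx \tilde T^2/(2\log\tilde T)$; the case $p>1$ is then reduced to $p=1$ by pointwise domination of the integrand. Your two applications of L'H\^{o}pital's rule remove the special-function input entirely and treat all $p\geq 1$ in one computation, which buys three things: (i) self-containedness --- nothing needs to be cited, and your second step can even be made a direct L'H\^{o}pital on $I(\tilde T)$ against the candidate, since $I'(\tilde T)=g(\tilde T)$, which bypasses the ``integrate an asymptotic equivalence'' fact you implicitly invoke; (ii) a sharper conclusion, $I(\tilde T)/\tilde T^2\sim \tfrac{1}{2}(\log\tilde T)^{-p}$ with exact constant, strictly stronger than the stated bound when $p>1$; (iii) robustness: the identical computation is valid for any fixed $p\in(0,1)$, so your method in fact establishes \eqref{g_growth} in the regime the paper describes as ``difficult'' (the paper's caveat there is really about non-uniformity as $p\to 0$, not about any fixed small $p$). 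For the same reason, your closing remark that $p\geq 1$ is ``required'' should be softened: it is needed only to dominate the sharp rate $(\log\tilde T)^{-p}$ by the particular rate $(\log\tilde T)^{-1}$ appearing in the lemma statement, not for the convergence to zero itself. A last minor point of comparison: the paper's domination step asserts $(\log u)^{-p}<(\log u)^{-1}$ for all $u>1$, which actually reverses on $u\in(2,e)$ (harmless, since that region contributes $O(\tilde T)$ to the double integral); your uniform treatment avoids even this slip.
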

\begin{proof}[Proof of Lemma \ref{small_lemma}]
First, consider the case where $p=1$. Define $li(x)$ as the anti-derivative of $1/\log(x)$. 
To find the growth order of $li(x)$, we note that $li(x)=Ei(\log x)$, where $Ei(x)$ standards for the \textit{exponential integral} with the form $Ei(x)=\int_{-\infty}^x \frac{e^t}{t}dt$. This can be shown via the change of variable $\log(u)=t$. Note that we have the following asymptotic expansion for $Ei(x)$ \citep{Cody1969ChebyshevAF}:
    \begin{align*}
    Ei(x)
    &=\frac{\exp(x)}{x}(1+\frac{1}{x}+\frac{2}{x^2}+\frac{6}{x^3}+\ldots) \\
    &=\frac{\exp(x)}{x}(1+O(1/x)) \text{ when } x>1.
    \end{align*}
    Thus, $Ei(\log x)=\frac{x}{\log x}(1+O(1/\log x)) \approx \frac{x}{\log x}$ for large $x$.

    As a result, dropping the constants and small order terms yield
    \begin{align*}
        \int_1^{\tilde T} \int_2^x \frac{1}{\log(u)} dudx
        & = \int_1^{\tilde T} Ei(\log x)dx\\
        & = \int_1^{\tilde T} \frac{x}{\log x}dx \\
        & = Ei(2\log \tilde T)
    \end{align*}
    Hence, we have 
    \begin{align*}
        \lim_{\tilde T\rightarrow \infty} \left[\int_1^{\tilde T} \int_2^x \frac{1}{\log(u)} dudx\right ] / \tilde T^2
        & = \lim_{\tilde T\rightarrow \infty} Ei(2\log \tilde T)/\tilde T^2\\
        & = O(\frac{1}{2 \log \tilde T}).
    \end{align*}

Lastly, when $p>1$, $\frac{1}{\log u} > [\frac{1}{\log u}]^p$ uniformly for all $u>1$. Hence, we have 
\[
\lim_{\tilde T\rightarrow \infty} \left[\int_1^{\tilde T} \int_2^x \frac{1}{\log(u)^p} dudx\right ] / \tilde T^2 < \lim_{\tilde T\rightarrow \infty} \left[\int_1^{\tilde T} \int_2^x \frac{1}{\log(u)} dudx\right ] / \tilde T^2,
\]
where the latter limit decays at order $O(\frac{1}{2\log \tilde T})$ as shown above.
\end{proof}

\begin{assumption}\label{Assume1}
The weights $w_t(x)$ in \eqref{w_t_forest} satisfies that for all $x \in \mathbb B$, $w_t(x)=O(1/\tilde T)$.
\end{assumption}
Assumption \ref{Assume1} imposes the condition on the decay order of each weights. Note that by the definition of $w_t(x)$ in \eqref{w_t_forest} and \citep[Assumption 2]{Meinshausen2006QuantileRF}, we know that $w_t(x)=o(1)$. 
Assumption \ref{Assume1} thus assumes an exact order of decay of the weights.
\begin{assumption}\label{Assume2}
The true conditional distribution function is Lipschitz continuous with parameter $L$. That is, for all $x,x'$ in the support of the random variable $X$.
\[
\sup_z |F(z|X=x) - F(z|X=x')|\leq L \|x-x'\|_1.
\]
\end{assumption}
\begin{assumption}\label{Assume3}
For every $x$ in the support of $X$, the conditional distribution function $F(z|X=x)$ is continuous and strictly monotonically increasing in $z$.
\end{assumption}
We remark that Assumption \ref{Assume2} and \ref{Assume3} are identical to \citep[Assumption 4 and 5]{Meinshausen2006QuantileRF}, respectively.

\begin{proof}[Proof of Proposition \ref{ptwise_converge_QRF}]

The proof is motivated by the analyses in \citep{Meinshausen2006QuantileRF}, which assumes $(\tilde{Y}_t, \tilde{X}_t), t\geq 1$ are independent and identically distributed. In essence, we analyze the point-wise difference between the estimate $\hat F(z|x)$ in \eqref{QRF} and the true value $F(z|x)$. The difference can then be broken into two terms. Both terms can be bounded by Chebyshev inequalities, leading to convergence to zero.

For each observation $t=1,\ldots,\tilde T$, denote $U_t:= F(\tilde Y_t|X=\tilde X_t)$ as the quantile of the $t$-th empirical residual $\tilde{Y}_t$. Note that $U_t \sim \text{Unif}[0,1]$ by the property of the distribution function, which is continuous by Assumption \ref{Assume3}.

By the form of the estimator $\hat F(z|x)$ in \eqref{QRF}, we break it into two parts:
\begin{align*}
    \hat F(z|x)
     =& \sum_{t=1}^{\tilde T} w_t(x) \boldone(\tilde Y_t\leq z)\\
     \overset{(i)}{=} &\sum_{t=1}^{\tilde T} w_t(x) \boldone(U_t \leq F(z|\tilde X_t))\\
      =&
     \sum_{t=1}^{\tilde T} w_t(x) \boldone(U_t \leq F(z|x))+  \sum_{t=1}^{\tilde T} w_t(x) (\boldone(U_t \leq F(z|\tilde X_t))-\boldone(U_t \leq F(z|x))).
\end{align*}
The equivalence (i) holds because the event $\{\tilde Y_t \leq z\}$ is identical to the event $\{U_t \leq F(z|X=\tilde X_t)\}$ under Assumption \ref{Assume3}. Thus, we have that
\begin{align*}
    |\hat F(z|x)-F(z|x)|
     \leq & \underbrace{\left|\sum_{t=1}^{\tilde T} w_t(x) \boldone(U_t \leq F(z|x))-F(z|x)\right|}_{(a)} + \\
     & \underbrace{\left|\sum_{t=1}^{\tilde T} w_t(x) (\boldone(U_t \leq F(z|\tilde X_t))-\boldone(U_t \leq F(z|x)))\right|}_{(b)}.
\end{align*}

\noindent \textit{1) Bound of term (a).} The first term can be bounded using Chebyshev inequality. Let $z':=F(z|x)$. Define $U':=\sum_{t=1}^{\tilde T} w_t(x) \boldone(U_t \leq z')$. By the linearity of expectation taken over $U_t$, we have
\begin{align*}
    \mathbb E[U'] 
    &= \sum_{t=1}^{\tilde T} w_t(x) \mathbb E[ \boldone(U_t \leq z')] \\
    &= \left[\sum_{t=1}^{\tilde T} w_t(x)\right] z' \overset{(i)}{=} z', 
\end{align*}
where (i) holds under the definition of $w_t(x)$ in \eqref{w_t_forest}, which satisfies $\sum_{t=1}^{\tilde T} w_t(x)=1$ as remarked earlier. Now, for any $\epsilon >0$,
\begin{align*}
    & \mathbb P\left(\left|\sum_{t=1}^{\tilde T} w_t(x) \boldone(U_t \leq F(z|x))-F(z|x)\right|\geq \epsilon\right) \\
     = &\mathbb P(|U'-z'|\geq \epsilon)  \leq \text{Var}(U')/\epsilon^2.
\end{align*}
Note that 
\begin{align}
\text{Var}(U')
=&\text{Var}(\sum_{t=1}^{\tilde T} w_t(x) \boldone(U_t \leq z')) \nonumber \\ 
=&
\underbrace{\sum_{t=1}^{\tilde T} w_t(x)^2 \text{Var}(\boldone(U_t\leq z'))}_{(i)} + \underbrace{\sum_{i\neq j} w_i(x)w_j(x) \text{Cov}(\boldone(U_i\leq z'),\boldone(U_j\leq z'))}_{(ii)}. \label{eq1}  
\end{align}

We need to show that (i) and (ii) in \eqref{eq1} both converge to zero. To show the convergence of (i), we have $w_t(x)=O(1/\tilde T)$ by Assumption \ref{Assume1} and note that $\text{Var}(\boldone(U_t\leq z'))=\mathbb E(\boldone(U_t\leq z')^2)-E(\boldone(U_t\leq z'))^2=z'-z'^2$. Hence, $\text{Var}(\boldone(U_t\leq z'))<1$ and we have $\sum_{t=1}^{\tilde T} w_t(x)^2 \text{Var}(\boldone(U_t\leq z'))<\sum_{t=1}^{\tilde T} w_t(x)^2=O(1/\tilde T).$

To show the convergence of (ii), we have by Assumption \ref{Assume0} that 
\begin{align*}
    \sum_{i\neq j} w_i(x)w_j(x) \text{Cov}(\boldone(U_i\leq z'),\boldone(U_j\leq z')) 
    \leq & \sum_{k=1}^{\tilde T-1} O\left(\frac{\tilde T-k}{\tilde T^2}\right) \tilde g(k) \\
    \leq & \int_1^{\tilde T} O\left(\frac{\tilde T-k}{\tilde T^2}\right) \tilde g(k) dk \\ 
    = & O\left(\tilde T^{-1}\right) \int_1^{\tilde T} \tilde g(k)dk - O\left(\tilde T^{-2}\right) \int_1^{\tilde T} k\tilde g(k)dk \\ 
    = &O\left(T^{-1}\right) [G(\tilde T)-G(1)]- O\left(\tilde T^{-2}\right) \int_1^{\tilde T} k\tilde g(k)dk,
\end{align*}
where $G(x):=\int_1^x \tilde g(k)dk$ is the anti-derivative. Using integration by part with $u=k, dv=\tilde g(k)dk$, we have
\[
\int_1^{\tilde T} k \tilde g(k)dk = \tilde TG(\tilde T)-G(1) - \int_1^{\tilde T} G(x)dx.
\]
Thus, dropping constants and small order terms yield 
\[
\sum_{i\neq j} w_i(x)w_j(x) \text{Cov}(\boldone(U_i\leq z'),\boldone(U_j\leq z'))\leq \left[\int_1^{\tilde T}\left[\int_1^x \tilde g(k)dk\right]dx\right]/\tilde T^2.
\]
By \eqref{g_growth} in Assumption \ref{Assume0}, we thus have the desired convergence result.

\vspace{0.1in}

\noindent \textit{2) Bound of term (b).} Define $W:=\sum_{t=1}^{\tilde T} w_t(x) \boldone(U_t \leq F(z|\tilde X_t))$. Note that $
\mathbb E(W) = \sum_{t=1}^{\tilde T} w_t(x) F(z|\tilde X_t)$. We have for any $\epsilon>0$,
\begin{align*}
    & \ \mathbb P(|W-\mathbb E(W)|>\epsilon) \\
\leq & \ \text{Var}(W)/\epsilon^2\\
= & \ (\epsilon)^{-2}\left[\sum_{t=1}^{\tilde T} w_t(x)^2 \text{Var}(\boldone(U_t\leq F(z|\tilde X_t))) + \sum_{i\neq j} w_i(x)w_j(x) \text{Cov}(\boldone(U_i\leq F(z|\tilde X_i)),\boldone(U_j\leq F(z|\tilde X_j)))\right].
\end{align*}
By the same argument for bounding term (a) above, we have that $W \overset{p}{\rightarrow} \mathbb E[W]$ as sample size $\tilde T \rightarrow \infty$.

As a result, we have
\[
\left|\sum_{t=1}^{\tilde T} w_t(x) (\boldone(U_t \leq F(z|\tilde X_t))-\boldone(U_t \leq F(z|x)))\right| \overset{p}{\rightarrow} \left|\sum_{t=1}^{\tilde T} w_t(x)(F(z|\tilde X_t)-F(z|x))\right|.
\]
By Assumption \ref{Assume2}, we have
\[
\left|\sum_{t=1}^{\tilde T} w_t(x)(F(z|\tilde X_t)-F(z|x))\right|\leq \sum_{t=1}^{\tilde T} w_t(x) L\|\tilde X_t-x\|_1.
\]
The rest of proof follows due to \citep[Lemma 2]{Meinshausen2006QuantileRF}, which shows that 
\[\sum_{t=1}^{\tilde T} w_t(x) \|\tilde X_t-x\|_1=o_p(1).\] 
\end{proof}

\begin{proof}[Proof of Theorem \ref{thm_non_exchange}]

Under \SPCI{} interval construction in \eqref{SPCI_interval}, the equivalence in \eqref{esti_prob} implies that
\[
\bP(Y_t \in \widehat{C}_{t-1}(X_t)|X_t) = F(\hatQtz{1-\alpha+\hat{\beta}}|\mathcal{E}_t^w)-F(\hatQtz{\hat{\beta}}|\mathcal{E}_t^w),
\]
where $\widehat Q_t(p), p \in [0,1]$ is the estimated $p$-th quantile of $\hat{\epsilon}_t$, $F(z|\mathcal{E}_t^w)$ is the unknown distribution function of $\hat{\epsilon}_t$, and $\betaReal$ minimizes interval width per the procedure in Algorithm \ref{algo_SPCI}.

To finish the proof, by Proposition \ref{ptwise_converge_QRF}, we know that the conditional distribution estimator $\hat F(z|\mathcal{E}_t^w)$ using QRF converges point-wise to the true $F(z|\mathcal{E}_t^w)$ as the sample size (hence the number of residuals) approaches infinity. By Lemma \ref{lem1}, we thus know that $\widehat Q_t(p) \rightarrow Q_t(p)$ in probability for all $p \in [0,1]$. 

We can thus use the continuous mapping theorem \citep[Theorem 2.3]{van2000asymptotic} to finish the proof: by Assumption \ref{Assume2}, the true conditional distribution function $F$ is absolutely continuous and therefore differentiable almost everywhere. Thus, the set of discontinuity points of $F$ has measure zero. As the number of data $\tilde T \rightarrow \infty$ when training QRF, we finally have that in probability, 
\begin{align*}
    &F(\hatQtz{1-\alpha+\hat{\beta}}|\mathcal{E}_t^w)-F(\hatQtz{\hat{\beta}}|\mathcal{E}_t^w)\\
     \rightarrow & F(Q_t(1-\alpha+\hat{\beta})|\mathcal{E}_t^w)-F(Q_t(\hat{\beta)}|\mathcal{E}_t^w)
    = 1-\alpha. 
\end{align*}
\end{proof}

\section{Experimental details}\label{append_exp}

\noindent \textit{(1) Baseline methods.} We compare \SPCI{} with three recent CP methods for non-exchangeable data or time series, which have also been carefully described in the literature review. In particular, they all leverage the feedback $Y_t$ after it is sequentially revealed.
\begin{itemize}
    \item \EnbPI{} \citep{xu2021conformal} proposes a general framework for constructing time-series prediction intervals. In particular, it fits LOO regression models and uses residuals as non-conformity scores. Comparing our use of \SPCI{} in experiments, the only difference appears in using conditional rather than empirical quantiles for the calibration of interval width.
    \item AdaptiveCI \citep{Gibbs2021AdaptiveCI} is an adaptive procedure that adjusts the significance level $\alpha$ based on historical information of interval coverage. It leverages CQR \citep{CPquantile} to produce intervals that maintain coverage validity in theory. We use the quantile random forest as the predictor and update $\alpha$ according to the simple online update (ibid., Eq (2)).
    \item NEX-CP \citep{RinaNonExchange} uses weighted quantiles to tackle arbitrary distribution drift in test data. In particular, the implementation is based on full conformal with weighted least squares regression models, which empirically yields more stable coverage than the naive split conformal method.
\end{itemize}

\vspace{0.1in}
\noindent \textit{(2) Real-data description.} We describe the three real time-series for results in Section \ref{sec_real_data}. The first dataset is the wind speed data (m/s) at wind farms operated by the Midcontinent Independent System Operator (MISO) in the US \citep{Zhu2021MultiResolutionSP}. The wind speed record was updated every 15 minutes over a one-week period in September 2020. The second dataset contains solar radiation information\footnote{Collected from National Solar Radiation Database (NSRDB): https://nsrdb.nrel.gov/.} in Atlanta downtown, which is measured in Diffuse Horizontal Irradiance (DHI). The full dataset contains a yearly record in 2018 and is updated every 30 minutes. We remark that uncertainty quantification for both wind and solar is important for accurate and reliable energy dispatch. The last dataset tracks electricity usage and pricing \citep{Harries1999SPLICE2CE} in the states of New South Wales and Victoria in Australia, with an update frequency of 30 minutes over a 2.5-year period in 1996–1999. We are interested in tracking the quantity of electricity transferred between the two states.

\subsection{Simulation}\label{append_simul}

We first describe details regarding data simulation procedures. We then show additional rolling coverage and width results when comparing with NEX-CP.

\subsubsection{Data simulation}
For the results in Table \ref{simul_results}, we simulate the non-stationary and heteroskedastic time-series as follows:
\begin{enumerate}
    \itemsep 0em
    \item Non-stationary (Nstat) time-series: We let 
    \begin{align}
        & f(X_t)=g(t)h(X_t). \label{non_stationary} \\
        & g(t)=\log(t')\sin(2\pi t'/12), t' = \text{mod}(t,12).\nonumber \\
        & h(X_t) = (|\beta^TX_t|+(\beta^TX_t)^2+|\beta^TX_t|^3)^{1/4}.\nonumber
    \end{align}
    Note that the model in \eqref{non_stationary} can represent non-stationary time-series due to additional time-related effects (e.g., time drift, seasonality, periodicity, etc.). For a fixed window size $w\geq 1$, each feature observation $X_t=[Y_{t-w},\ldots,Y_{t-1}]$ contains the past $w$ observations of the response $Y$. We sample the errors $\epsilon_t$ from an AR(1) process, where $\epsilon_t=\rho \epsilon_{t-1}+e_t$ and $e_t$ are {\it i.i.d.} normal random variables with zero mean and unit variance with $\rho=0.6$. 

    \revold{We want to compare the performance of \EnbPI{} and \SPCI{} assuming no feature mis-specification, so that the only difference in interval coverage/width lies in how the residuals are used to construct the intervals. Therefore, because $f$ in \eqref{non_stationary} explicitly depends on $t$ and $X_t$, we use the new feature $\tilde{X}_t := [\text{mod}(t,12), X_t]$ to predict $Y_t$. We acknowledge that in practice, the true periodicity constant 12 in \eqref{non_stationary} is unknown, and one must estimate it before constructing the new feature $\tilde{X}_t$. Meanwhile, Table \ref{simul_mod_unknown} compares all four CP methods when the time information is unknown (i.e., $\tilde{X}_t=X_t$), and we still observe much narrower intervals by \SPCI{} than the baselines.}

    \begin{table}[!t]
    \caption{Simulation on non-stationary time-series: the setup is identical to Table \ref{simul_results}. We compare \SPCI{} against baseline CP methods when no time information is assumed known (i.e., $\tilde{X}_t=X_t$).}
    \label{simul_mod_unknown}
    \centering
    \resizebox{\linewidth}{!}{
    \begin{tabular}{llllllll}
    \toprule
    \multicolumn{2}{c}{SPCI} & \multicolumn{2}{c}{EnbPI} & \multicolumn{2}{c}{AdaptiveCI} & \multicolumn{2}{c}{NEX-CP} \\
    \midrule
    Coverage & Width & Coverage & Width & Coverage & Width & Coverage & Width \\
    0.92 (2.75e-3)	& 12.96 (2.56e-2) & 0.90 (2.21e-3)	& 25.41 (4.79e-2) & 0.90 (4.12e-3)	& 28.00 (5.81e-2) & 0.93 (3.10e-3)	&  46.50 (6.29e-2)\\
    \bottomrule
    \end{tabular}
    }
    \vspace{-0.2in}
\end{table}

    \item Heteroskedastic (Hetero) time-series: We let
    \begin{align}
         & f(X_t)=(|\beta^T X_t|+(\beta^T X_t)^2+|\beta^T X_t|^3)^{1/4}. \label{hetero} \\
         & \text{Var}(\epsilon_t)=\sigma(X_t)^2, \sigma(X_t)=\textbf{1}^T X_t.
    \end{align}
    Note that the model above represents the generalized autoregressive conditional heteroskedasticity (GARCH) model \citep{GARCH}, where variances of response $Y_t$ depend on its feature $X_t$. We let features $X_t \in \mathbb{R}^{20}$, with i.i.d. entries from $\text{Uniform}[0,e^{0.01\text{mod}(t,100)})$. Due to heteroskedastic errors, we estimate conditional quantile of normalized residuals $\hat{\epsilon}_t:=(Y_t-\hat{f}_t(X_t))/\hat{\sigma}(X_t)$ and multiply the quantile values by estimates $\hat{\sigma}(X_t)$ to construct the prediction intervals.
\end{enumerate}

For the simulated results in Table \ref{simul_results_with_NEXCP}, the data with distribution-shift and change-points are simulated as follows. For $N=2000$ and $X_i \sim \mathcal N(0,\boldsymbol I_4), i=1,\ldots,N$:
\begin{enumerate}
    \item Distribution-drift (Drift): $Y_i\sim X_i^T\beta_i + \mathcal{N}(0,1)$, where $\beta_1 = (2,1,0,0), \beta_N=(0,0,2,1)$, and $\beta_i, i=2,\ldots,N-1$ is a linear interpolation of $\beta_1$ and $\beta_N$.
    \item Changepoints (Change): $Y_i\sim X_i^T\beta_i + \mathcal{N}(0,1)$,
    \begin{align*}
        & \beta_1=\ldots=\beta_{500} = (2,1,0,0) \\
        & \beta_{501}=\ldots=\beta_{1500} = (0,-2,-1,0) \\
        & \beta_{1501}=\ldots=\beta_N = (0,0,2,1).
    \end{align*}
\end{enumerate}
Similar to NEX-CP, we apply \SPCI{} after a burn-in period of the first $100$ sample points, and in addition, adaptively refit the point estimator $\hat{f}$ using a rolling window of $\min(T, T_0)$ points during testing for $T=101,\ldots,2000$. We choose $T_0=300$ under distribution shifts and $T_0=200$ under changepoints. Similar to NEX-CP, we use weighted linear regression with exponentially decaying weights to train the point estimator $\hat{f}$ in \SPCI{}. 

\subsubsection{Comparison with NEX-CP} 

We explain why \SPCI{} tends to under-cover in these settings before $\alpha$ adjustment. We suspect the primary reasons are that prediction residuals $\hat{\epsilon}_i$ in these settings are \textit{(nearly) independent yet non-identically distributed}. More precisely, regarding independence, suppose we use the split conformal framework in \SPCI{} to train $\hat{f}$ and obtain residuals on the calibration set. We thus have that for each prediction residual $\hat{\epsilon}_i$ in the calibration set,
\begin{equation}\label{simul_eps}
\hat{\epsilon}_i = Y_i-\widehat{Y}_i \sim (X_i^T\beta_i + \mathcal{N}(0,1))-\hat{f}(X_i).
\end{equation}
Note that $X_i$ are all independent by design. Except for the possible dependency in $\beta_i$, which is zero in the change-point setting, the (unobserved) test residual $\hat \epsilon_{T+1} \indep \hat \epsilon_{T+1-k}, k \geq 1$, where $\indep$ denotes independence of random variables. We empirically verify the independence of residuals through the PACF plot in Figure \ref{simul_pacf}. On the other hand, regarding non-identical distribution, because of drifts or changepoints through the changes in $\beta_i$, the residuals do not follow the same distribution. Thus, the QRF estimated on past residuals may not be a desirable estimator for the conditional quantile of the test residual $\hat \epsilon_{T+1}$, hence weakening the performance of \SPCI{} in this setting.

\begin{figure}[!t]
    \centering
    \begin{minipage}{0.48\textwidth}
       \includegraphics[width=\linewidth]{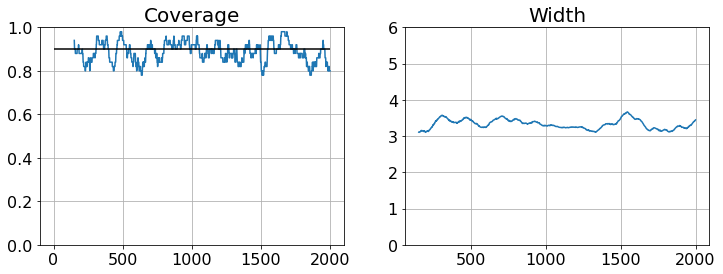}
       \subcaption{Distribution shift}
    \end{minipage}
    \begin{minipage}{0.48\textwidth}
       \includegraphics[width=\linewidth]{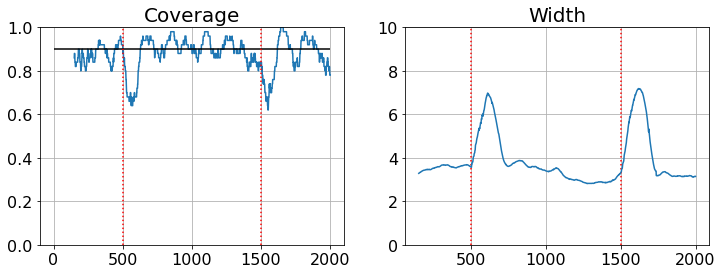}
       \subcaption{Changepoint}
    \end{minipage}
    \caption{\revold{Rolling coverage and width during test time without adjusted $\alpha$ values. Target coverage at 0.9 is marked in the black lines. In (b), the two changepoints are marked in dotted red line at time indices 500 and 1500.}}
    \label{rolling_cov_simul}
\end{figure}

\begin{figure}[!b]
    \centering
    \begin{minipage}{0.48\linewidth}
       \includegraphics[width=\linewidth]{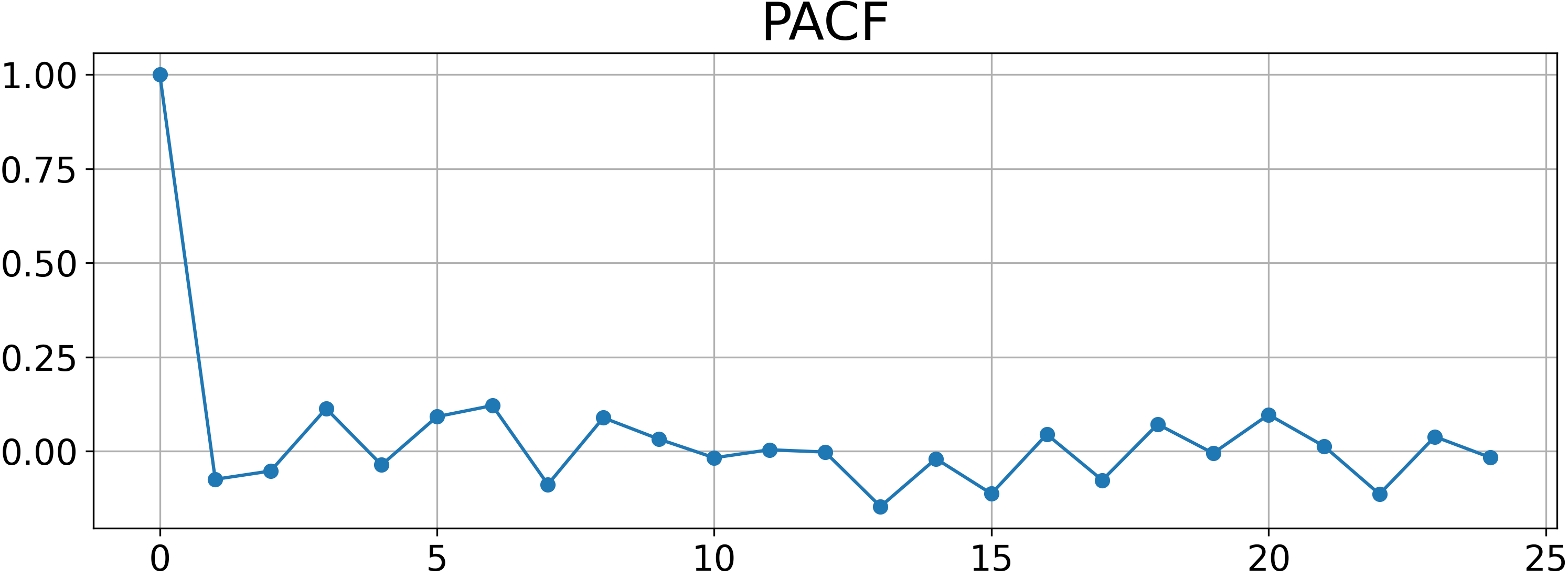}
       \subcaption{Distribution shift}
    \end{minipage}
    \begin{minipage}{0.48\linewidth}
       \includegraphics[width=\linewidth]{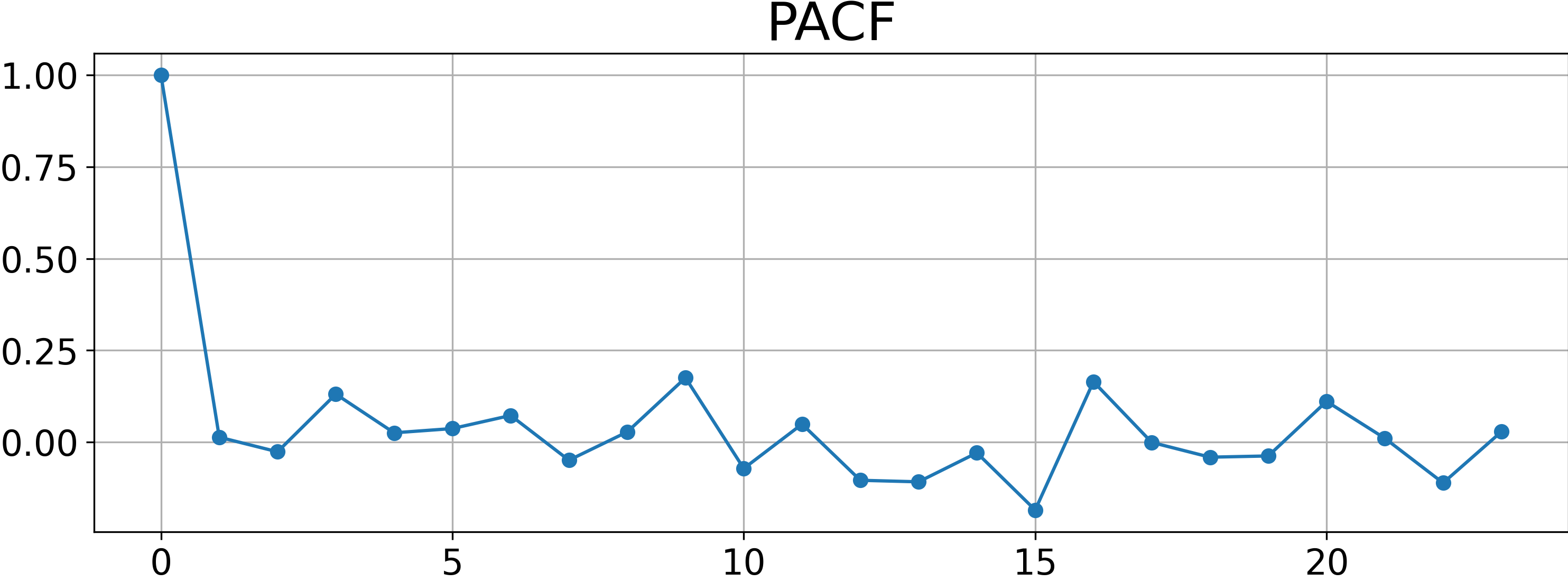}
       \subcaption{Changepoint}
    \end{minipage}
    \caption{\revold{PACF using 300 residuals (dist. shift) and 200 residuals (change-point). We see near independence of the residuals, which are non-identically distributed due to the data generation.}}
    \label{simul_pacf}
\end{figure}

\subsection{Additional real-data comparisons}

We compare \SPCI{} with AdaptiveCI on stock market data. Specifically, the dataset is publicly available on Kaggle \url{https://www.kaggle.com/datasets/paultimothymooney/stock-market-data}, where we are interested in constructing the prediction intervals for the closing price. We randomly select three NASDAQ stock from three companies. Table \ref{stock_results} shows several findings:
\begin{itemize}
    \item When AdaptiveCI and SPCI both yield valid coverage (on company AJISF), the width of SPCI is significantly narrower.
    \item Even when AdaptiveCI loses coverage and SPCI maintains coverage (on company AGTC), the width of SPCI is still significantly narrower.
    \item When both methods lose coverage (on company AAVL), the loss by SPCI is less and SPCI still yields narrower intervals.
\end{itemize}

\begin{table}[!t]
    \vspace{-0.15in}
    \caption{Marginal coverage and width by \texttt{SPCI} and AdaptiveCI on three NASDAQ stock market data. The tarrget coverage is 0.9, and entries in the bracket indicate standard deviation over three independent trials.}
    \label{stock_results}
    \centering
    \resizebox{\linewidth}{!}{
    \begin{tabular}{ccccccc}
    \toprule
    Method & \multicolumn{2}{c}{Company AJISF} & \multicolumn{2}{c}{Company AGTC} & \multicolumn{2}{c}{Company AAVL}\\
    \midrule
    & Coverage & Width & Coverage & Width & Coverage & Width \\
    SPCI & 0.89 (2.34e-3) &	17.64 (1.24e-1) &  0.95 (2.43e-3)	& 2.89 (5.23e-2) & 0.81 (3.64e-3)	& 1.03 (2.34e-2)\\
    AdaptiveCI & 0.94 (3.43e-3)	& 30.20 (2.53e-1)	& 0.71 (1.53e-2)	& 5.88 (7.43e-2)	& 0.64 (2.32e-2)	& 2.18 (3.37e-2)\\
    \bottomrule
    \end{tabular}
    }
\end{table}

\subsection{Multi-step inference}\label{append_multi_step}

\noindent \textit{(1) Motivation and setup.} We first motivate the study of multi-step ahead prediction interval. For examples in Section \ref{sec_real_data}, all intervals are one step ahead: the response variable $Y_t$ is revealed \textit{before} $\Ctalpha$ is constructed, which is the prediction interval for $Y_{t+1}$. Such immediate feedback is advantageous for all adaptive methods as they thus have access to the most up-to-date information about the data process. Nevertheless, such access can be neither feasible nor desirable for some use cases. In energy systems such as wind or solar prediction, we often need multiple forecasts spanning a long enough future horizon to allow enough time for subsequent dispatch. Meanwhile, lags in data collection can limit the availability of feedback---for $S>1$, $Y_t$ may not be revealed until all $S$ intervals ahead are constructed.

We consider the following multi-step ahead prediction setting. Fix a value of $S \geq 1$, which denotes the $s-$step ahead prediction setting ($S=1$ refers to examples in earlier sections). Features $X_t=[Y_{t-1},\ldots,Y_{t-\tau}]$ are auto-regressive with a pre-specified window $\tau \geq 1$. At prediction time $t$, we need to construct $S$ prediction intervals at once for time indices $t,\ldots,t+S-1$. In particular, responses $Y_t,\ldots,Y_{t+S-1}$ (and thus features $X_{t+1},\ldots,X_{t+S}$) are not available until we construct prediction intervals at indices $t+S,\ldots,t+2S-1$. 

\vspace{0.1in}
\noindent \textit{(2) Multi-step \SPCI{} algorithm.} Note that constructing multi-step ahead prediction intervals using \SPCI{} involves estimating the \textit{joint} distribution of $\hat{\epsilon}_{t+1},\ldots,\hat{\epsilon}_{t+S}$ every $S$ test indices. Doing so can be highly challenging. Instead, we take a simplified ``divide-and-conquer'' approach based on the LOO fitting in \EnbPI{}. First, we train $S$ sets of LOO predictors for estimating the value of $\widehat{Y}_{t+j}, j=0,\ldots,S-1$. This is implemented by fitting $B$ bootstrap models on each lagged data $\{(X_t,Y_{t+s})\}_{t=1}^{T-s+1}, s=1,\ldots,S$. Then, we compute residuals only at $t=1+kS: kS\leq T-1$. We do so because on test data, new feature $X_t$ and output $Y_t$ are revealed only in every $S$ step. Lastly, we fit QRF $S$ times using past residuals with lags to obtain $s$ prediction intervals at once.

We briefly compare and contrast Algorithm \ref{algo_SPCI} (\SPCI) and \ref{algo_multi_step} (multi-step ahead \SPCI) when LOO point predictors are trained. Computationally, we need to refit $S-1$ more sets of LOO predictors in multi-step ahead \SPCI{} for point prediction. On the other hand, both algorithms fit the same number of QRF regressors for constructing prediction intervals. In practice, multi-step \SPCI{} is expected to yield wider intervals as $S$ increases because there is greater uncertainty when fitting the baseline regression or QRF on lagged data. A simple example is the $AR(1)$ process where $x_t=ax_{t-1}+\epsilon_t, \epsilon_t \overset{i.i.d.}{\sim} \mathcal{N}(0,1)$. Using the present feature $x_{t-1}$, we have $x_{t+S}=a^{S+1}x_{t-1} + \sum_{i=1}^S a^{i-1}\epsilon_{t+i}$, whereby the error distribution $a^{i-1}\epsilon_{t+i}\sim N(0,\sum_{i=1}^S a^{2(i-1)})$, so width naturally increases.

\section{Additional technical details}\label{append_detail}

We first present the \SPCI{} algorithm for exchangeable data in Algorithm \ref{algo_SPCI_exchangeable}. We then present the \SPCI{} algorithm for multi-step ahead inference in Algorithm \ref{algo_multi_step}.

\begin{algorithm}[!t]
\cprotect\caption{\SPCI{} for exchangeable data (based on split conformal)}
\label{algo_SPCI_exchangeable}
\begin{algorithmic}[1]
\REQUIRE{Training data $\{(X_t, Y_t)\}_{t=1}^T$, significance level $\alpha$.
}
\ENSURE{Prediction intervals $\widehat{C}_{t-1}(X_t), t>T$}
\STATE Randomly split $\{1,\ldots,T\}$ into disjoint index sets $\mathcal{I}_1$ and $\mathcal{I}_2$.
\STATE Train a point predictor $\hat{f}$ with $\{(X_t,Y_t)\}_{t\in \mathcal{I}_1}$.
\STATE Obtain residuals $\hat{\epsilon}_t:=Y_t-\hat{f}(X_t)$ for $t \in \mathcal{I}_2$.
\FOR {$t>T$}
\STATE Return the prediction interval $\Ctalpha$ as in \eqref{traditional_CP}.
\ENDFOR
\end{algorithmic}
\end{algorithm}

\begin{algorithm}[!t]
\cprotect \caption{Multi-step \texttt{SPCI} (based on LOO prediction in \EnbPI{} {\citep{xu2021conformal}})}
\label{algo_multi_step}
\begin{algorithmic}[1]
\REQUIRE{Training data $\{(X_t, Y_t)\}_{t=1}^T$, significance level $\alpha$, number of bootstrap estimators $B$, aggregation function $\phi$, conditional quantile regression algorithm $\mathcal{Q}$, multi-step size $S>1$.
}
\ENSURE{Prediction intervals $\widehat{C}_{t-1}(X_t), t>T$\\}
\FOR[$\triangleright$ $s$-step ahead model fitting]{$s=1,\ldots,S$}
\STATE Sample with replacement $B$ index sets, each of size $T-s+1$: \\ $\{S_b:S_b\subset \{1,\ldots, T-s+1\}\}_{b=1}^B$.
\STATE Train $B$ corresponding bootstrap estimators $\{\hat{f}^b\}_{b=1}^B$ on data $\{(X_t,Y_{t+s-1}): t\in S_b\}$.\\
\COMMENT{$\triangleright$ Leave-one-out aggregation}
\STATE Initialize $\widehat{\boldsymbol \epsilon}=[\ ]$
\FOR {$t=1,1+S,\dots,1+k S$ such that $k S\leq T-1$}
\STATE $\hat{f}^s_{t}(X_t)=\phi(\{ \hat{f}^b(X_t), t \notin S_b\}_{b=1}^B)$
\STATE $\widehat{\boldsymbol \epsilon}.\text{append}(Y_{t+s-1}-\hat{f}^s_{t}(X_t))$
\ENDFOR 
\ENDFOR 
\FOR[$\triangleright$ Interval construction]{$t>T$}
\STATE Compute $s=\text{mod}(t-T,S+1)$ and $t'=t-s$ \\ \COMMENT{$\triangleright$ $t'$ denotes the most recent index where residual $\hat{\epsilon}_{t'}$ and feature $X_{t'+1}$ are available.}
\IF[$\triangleright$ Fit quantile regressors with updated residuals]{$s = 1$}
\STATE Re-fit $S$ quantile estimators $\{\hatQtz{\cdot \ ;s'}\}_{s'=1}^S$ with $\{(\hat\epsilon_j^w,\hat\epsilon_{j+s'-1})\}_{j=t-T+w}^{t-1-(S-1)}$.
\ENDIF
\STATE Compute $\betaReal = {\arg\min}_{\beta\in[0,\alpha]} (\hatQtz{1-\alpha+\beta; s}-\hatQtz{\beta; s})$ using $\hat\epsilon_{t'}^w$.

\STATE $\Ctalpha=[\hat Y_t+w_{\rm{left}}(t),\hat Y_t+w_{\rm{right}}(t)],$ where \\
$\hat Y_t=\phi(\{\hat{f}^s_{j}(X_{t'+1})\}_{j=1}^{T/S}), w_{\rm{left}}(t)=\hatQtz{\betaReal; s},w_{\rm{right}}(t)=\hatQtz{1-\alpha+\betaReal; s}$ 
.
\ENDFOR
\end{algorithmic}
\end{algorithm}

\end{document}